\documentclass{article}

\usepackage{microtype}
\usepackage{graphicx}
\usepackage{subcaption}
\usepackage{booktabs} 
\usepackage{multirow}
\usepackage{hyperref}

\usepackage{xcolor}
\usepackage{amsmath}

\usepackage[preprint]{icml2026}

\usepackage{amsmath}
\usepackage{amssymb}
\usepackage{mathtools}
\usepackage{amsthm}
\usepackage{dsfont}

\newcommand{\R}{\mathbb{R}}
\newcommand{\Prob}{\mathbb{P}}
\newcommand{\E}{\mathbb{E}}
\newcommand{\Var}{\text{Var}}
\newcommand{\calC}{\mathcal{C}}

\usepackage[capitalize,noabbrev]{cleveref}

\theoremstyle{plain}
\newtheorem{theorem}{Theorem}[section]
\newtheorem{proposition}[theorem]{Proposition}
\newtheorem{lemma}[theorem]{Lemma}

\theoremstyle{definition}

\newtheorem{assumption}[theorem]{Assumption}
\theoremstyle{remark}
\newtheorem{remark}[theorem]{Remark}

\usepackage{stmaryrd}
\usepackage{xstring}

\newcommand{\multiset}[2][a]{
\IfEqCase{#1}{%
{a}{\Lbag#2\Rbag}%
{0}{\Lbag#2\Rbag}%
{1}{\big\Lbag#2\big\Rbag}%
{2}{\Big\Lbag#2\Big\Rbag}%
{3}{\bigg\Lbag#2\bigg\Rbag}%
{4}{\Bigg\Lbag#2\Bigg\Rbag}%
}[\PackageError{multiset}{Undefined option to multiset: #1}{}]%
}

\usepackage[textsize=tiny]{todonotes}

\icmltitlerunning{Distribution-informed Efficient Conformal Prediction for Full Ranking}

\begin{document}

\twocolumn[
  \icmltitle{Distribution-informed Efficient Conformal Prediction for Full Ranking}



  \icmlsetsymbol{equal}{*}
  \icmlsetsymbol{cor}{$\dagger$}

  \begin{icmlauthorlist}
    \icmlauthor{Wenbo Liao}{equal,xxx,yyy}
    \icmlauthor{Huipeng Huang}{equal,yyy}
    \icmlauthor{Chen Jia}{zzz}
    \icmlauthor{Huajun Xi}{yyy}
    \icmlauthor{Hao Zeng}{yyy}
    \icmlauthor{Hongxin Wei}{cor,yyy}

  \end{icmlauthorlist}

  \icmlaffiliation{xxx}{Department of Mathematics, Chinese University of Hong Kong}
  \icmlaffiliation{yyy}{Department of Statistics and Data Science, Southern University of Science and Technology}
  \icmlaffiliation{zzz}{School of Computation, Information, and Technology, Technical University of Munich}

  \icmlcorrespondingauthor{Hongxin Wei}{weihx@sustech.edu.cn}

  \icmlkeywords{Machine Learning, ICML}

  \vskip 0.3in
]



\printAffiliationsAndNotice{}  

\begin{abstract}
Quantifying uncertainty is critical for the safe deployment of ranking models in real-world applications.
Recent work offers a rigorous solution using conformal prediction in a full ranking scenario, which aims to construct prediction sets for the absolute ranks of test items based on the relative ranks of calibration items.
However, relying on upper bounds of non-conformity scores renders the method overly conservative, resulting in substantially large prediction sets.
To address this, we propose Distribution-informed Conformal Ranking (\textbf{DCR}), which produces efficient prediction sets by deriving the exact distribution of non-conformity scores.
In particular, we find that the absolute ranks of calibration items follow Negative Hypergeometric distributions, conditional on their relative ranks.
DCR thus uses the rank distribution to derive non-conformity score distribution and determine conformal thresholds.
We provide theoretical guarantees that DCR achieves improved efficiency over the baseline while ensuring valid coverage under mild assumptions.
Extensive experiments demonstrate the superiority of DCR, reducing average prediction set size by up to $36\%$, while maintaining valid coverage.
\end{abstract}

\section{Introduction}
Ranking models are widely deployed in real-world applications, including search engines \citep{xiong2024search}, recommendation systems \citep{karatzoglou2013learning, hou2024large}, and information retrieval \citep{cao2006adapting, liu2009learning}.
However, most ranking models only output deterministic orderings without conveying the uncertainty associated with their predicted ranks, which hinders their safe and trustworthy deployment.
The challenge highlights the importance of uncertainty quantification for ranking models, which enables deployed systems to recognize when their predicted ranks are unreliable.



To this end, Transductive Conformal Prediction for Ranking (TCPR) \citep{fermanian2025transductive} proposes to construct prediction sets for the ranks of test items in a full ranking scenario.
Formally, given $n + m$ items, we observe the relative ranks of $n$ calibration items and seek to construct prediction sets for the absolute ranks of $m$ test items with a pre-trained ranking model.
Classical conformal prediction (CP) methods cannot be directly applied here because the non-conformity scores of calibration items cannot be computed without knowing their absolute ranks, which in turn depend on the unknown ranks of test items.
TCPR addresses this by constructing high-probability upper bounds for non-conformity scores. 
However, this upper-bound approach is conservative, yielding excessively large prediction sets that do not faithfully articulate the uncertainty of the ranking model.
This limitation motivates us to design a method for the full ranking scenario that bypasses conservative upper bounds and produces significantly efficient prediction sets.

In this work, we introduce Distribution-informed Conformal Ranking (\textbf{DCR}), a novel method that constructs efficient prediction sets for absolute ranks of test items with valid coverage.
In particular, we establish that the absolute ranks of calibration items follow Negative Hypergeometric distributions, conditional on their relative ranks.
This allows us to derive the exact distribution of unobservable non-conformity scores of calibration items in closed form, without resorting to conservative upper bounds.
The precise thresholds derived from the score distribution are then incorporated into the CP framework to produce efficient prediction sets for test items.
We provide rigorous theoretical guarantees that DCR achieves valid coverage under the exchangeability assumption. 
We further prove that DCR is guaranteed to produce smaller prediction sets than TCPR at the same miscoverage rate.
Besides, to further address scalability in large-scale ranking problems, we also propose a stochastic variant of DCR, termed Monte-Carlo Distribution-informed Conformal Ranking (MDCR), which significantly reduces computational complexity compared to DCR while preserving marginal coverage guarantees.
Our method imposes no parametric assumptions about the data distribution and is applicable to any black-box ranking model, making it a practical tool for quantifying ranking uncertainty.

To verify the effectiveness of our method, we conduct extensive evaluations on various models and datasets.
The results demonstrate that DCR produces significantly more efficient prediction sets with valid coverage.
For example, on ESOL \citep{delaney2004esol} with LambdaMart \citep{liu2009learning} at a target coverage rate $90\%$, DCR produces a relative length of 57.64\%, while maintaining coverage above $90\%$.
In comparison, TCPR produces a relative length of 72.50\%.
Besides, we conduct additional analyzes on how effectively DCR can exploit the rank distribution.
Through additional analyzes, we show that as the calibration set size increases or the test set size decreases, DCR achieves realized coverage increasingly close to the target level by more effectively exploiting the rank distribution.
Additionally, we find that the VA score is more adaptive than the RA score.

We summarize our contributions as follows:
    \begin{itemize}
        \item We propose DCR, a novel method that constructs efficient prediction sets for absolute ranks of test items with valid coverage.
        The core idea behind DCR is to derive the non-conformity score distribution based on the absolute rank distribution.
        \item We theoretically prove that DCR achieves valid coverage under the exchangeability assumption and is guaranteed to be more efficient than the baseline in expectation at the same miscoverage rate.
        \item We perform additional analyzes that lead to improved understandings of our method.
        In particular, we find that as the calibration set size increases, DCR achieves realized coverage increasingly close to the target level by more effectively exploiting the rank distribution.
    \end{itemize}

\section{Background}

\paragraph{Full ranking setup.}
\label{sec:setting}

We study the problem of quantifying ranking uncertainty in a full ranking scenario \citep{liu2009learning, rising2021uncertainty, chen2022optimal}.
Let $\mathcal{X}$ denote the input space and $\mathbb{R}$ denote the score space, which helps us properly define each item's rank.
We have a collection of $n+m$ items, each associated with a pair $(X_i, Y_i) \in \mathcal{X} \times \mathbb{R}$.
These items are partitioned into a calibration set of size $n$, $\{(X_i, Y_i)\}_{i \in [n]}$ and a test set of size $m$, $\{(X_{n+j}, Y_{n+j})\}_{j \in [m]}$, where $[n] := \{1,\ldots, n\}$.
To formalize ranking relationships, for any $Y \in \mathbb{R}$ and any finite subset $\mathcal{D} \subset \mathbb{R}$, we define the rank of $Y$ in $\mathcal{D}$ as
$
\mathrm{R}(Y,\mathcal{D}) := \sum_{z \in \mathcal{D}} \mathbf{1}\{Y \ge z\}.
$
Using this definition, we define the position of item $i$
within the calibration set by
$
R_i^c = \mathrm{R}\!\left(Y_i, \Lbag Y_j \Rbag_{j\in[n]}\right),
$
within the test set by
$
R_i^t = \mathrm{R}\!\left(Y_i, \Lbag Y_{n+j} \Rbag_{j\in[m]}\right),
$
and within the entire dataset by
$
R_i^{c+t} = R_i^c + R_i^t .
$


In this work, we assume that the calibration scores $\{Y_i\}_{i\in[n]}$ are known but test scores $\{Y_{n+j}\}_{j\in[m]}$ are unobserved.
Thus, the relative ranks of calibration items $\{R_i^{c}\}_{i\in[n]}$ are known, but the absolute rank of any item is unknown.
To establish statistical guarantees, we make the following assumptions as in prior work \citep{fermanian2025transductive}.

\begin{assumption}
    \label{ass:exchange}
    The vector $(X_i, Y_i)_{i\in [n+m]}$ is exchangeable and $(Y_i)_{i \in [n+m]}$ has no ties almost surely.
\end{assumption}

\begin{remark}[Handling Ties]
\label{rem:ties}
To strictly satisfy the no-tie condition in Assumption~\ref{ass:exchange}, we apply a standard random tie-breaking procedure in our experiments. Specifically, we add microscopic noise $\xi \sim \mathcal{U}(0, \epsilon)$ to the scores, which ensures a unique ranking order almost surely without affecting the substantive performance metrics.
\end{remark}

We assume access to a pre-trained ranking model
$
A : \mathcal{X} \times \mathcal{X}^{n+m} \to \mathbb{K},
$
which predicts the rank of a point inside a multiset.
Its inputs are the target item $X_i$ and the items $\Lbag X_j \Rbag_{j\in [n+m]}$ among which it seeks to sort.
We use $\Lbag\cdot\Rbag$ to denote a multiset (or bag), which is an unordered set with potentially repeated elements.
For clarity, when there is no confusion, we will omit the dependence in the multiset: $A(X_i):=A(X_i, \Lbag X_j \Rbag_{j\in [n+m]})$.
$A(X_i)$ could either be a rank or a value from which a rank can be deduced.

However, the predicted output $A(X_i)$ unavoidably contains errors.
To quantify errors, we define two uncertainty non-conformity score functions based on the output type of $A$.

\textbf{(RA) setting:} $\mathbb{K} = \mathbb{N}$, the rank is directly predicted by $A$: 
$\widehat{R}_i^{c+t}=A(X_i)$. 
In this case, we consider the residual score:
\begin{align*}
    s^{RA}(X_i, r) := |r - \widehat{R}_i^{c+t}|
\end{align*}
This is the difference between the rank $r$ and the predicted absolute rank of $X_i$ among $n+m$ items.

\textbf{(VA) setting:} $\mathbb{K} = \mathbb{R}$, the predicted rank is deduced from values output by $A$:
$\widehat{R}_{i}^{c+t}=\mathrm{R}(A(X_i), \Lbag A(X_j)\Rbag_{j \in [n+m]}).$ 
As $A(X_i)$ is a real value, we consider a refined score function:
\begin{align*}
    s^{VA}(X_i, r) := |A(X_i) - \mathrm{R}^{-1}(r, \Lbag A(X_j) \Rbag_{j \in [n+m]})|, 
\end{align*}
where $\mathrm{R}^{-1}(r,\mathcal{D}) := \sum_{z\in \mathcal{D}}z\mathbf{1}\{\mathrm{R}(z,\mathcal{D}) = r\}$ is the $r$-th smallest value in $\mathcal{D}$ if $\mathcal{D}$ has no ties.
This score quantifies the distance between the predicted value of $X_i$ and the score it should have had to be at rank $r$.

\paragraph{Transductive conformal prediction for ranking (TCPR).}
CP constructs prediction sets with finite-sample coverage guarantees by computing non-conformity scores on a calibration set and determining thresholds from the empirical distribution of the calibration scores.
We provide a detailed introduction to CP in Appendix \ref{sec: cp_intro}.
However, classical CP methods cannot be applied in the full ranking setting because the non-conformity scores of calibration items $s(X_i, R_i^{c+t})$ cannot be computed without knowing their absolute ranks $R_i^{c+t} = R_i^c + R_i^t$, where the terms $R_i^t$ depend on the unobserved test scores and are thus unknown.

TCPR addresses this challenge by constructing distribution-free high-probability bounds for the unknown absolute ranks of calibration items.
Specifically, TCPR constructs random variables $R_i^- , R_i^+ \in \mathbb{R}$ for $i\in[n]$,
which depend only on the observed data $(X_i, R_i^c)_{i \in [n]}$ and satisfy
\begin{equation}
\label{eq:tcpr-rank-bounds}
\mathbb{P}\!\left(
\forall i\in[n]: R_i^{c+t} \in [R_i^-, R_i^+]
\right) \ge 1-\delta,
\end{equation}
where $\delta \in (0,1)$ is a confidence level satisfying $\delta < \alpha$, and $\alpha \in (0,1)$ is the target miscoverage rate. 
TCPR proposes three types of absolute rank bounds: theoretical bounds, linear envelopes, and quantile envelopes. 
We provide a detailed introduction to these rank bounds in Appendix~\ref{sec:rank_bound}.

After obtaining bounds for $\{R_i^{c+t}\}_{i \in [n]}$, TCPR constructs high-probability upper bounds for the non-conformity scores of calibration items:
\begin{equation}
\label{eq:tcpr-proxy-score}
\tilde{s}_i
\;:=\;
\max_{r \in [R_i^-,\,R_i^+]}
s(X_i, r),
\qquad i\in[n].
\end{equation}
With probability at least $1-\delta$, the non-conformity scores of calibration items satisfy
$s_i \le \tilde{s}_i$ for all $i \in [n]$.
The threshold $s^*_{TCPR}$ is then determined using the $(1-\alpha+\delta)$-th empirical quantile of the upper bounds:
\begin{equation}\label{eq:threshold}
s^*_{TCPR} = \text{Quantile}(\frac{\lceil(n+1)(1-\alpha+\delta)\rceil}{n}, \{\tilde{s}_i\}_{i\in[n]}).
\end{equation}
For a test item $X_{n+j}$, TCPR constructs a prediction set for its absolute rank by
\begin{equation}
\label{eq:tcpr-pred-set}
\mathcal{C}(X_{n+j})
=
\bigl\{
r \in [n+m] :
s(X_{n+j}, r) \le s^*_{TCPR}
\bigr\}.
\end{equation}

\begin{theorem}
\label{thm:tcpr-validity}
If Assumption \ref{ass:exchange} holds and the rank bounds
$\{(R_i^-,R_i^+)\}_{i\in[n]}$ satisfy \eqref{eq:tcpr-rank-bounds},
then for any test item $X_{n+j}$ , the prediction set defined in \eqref{eq:tcpr-pred-set} satisfies
\[
\mathbb{P}\!\left(
R_{n+j}^{c+t} \in \mathcal{C}(X_{n+j})
\right) \geq (1-\alpha+\delta)(1-\delta) \geq 1-\alpha.
\]
\end{theorem}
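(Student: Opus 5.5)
The plan is to compare TCPR's threshold with an \emph{oracle} split-conformal threshold built from the true calibration scores $s_i := s(X_i, R_i^{c+t})$. On the event where all rank bounds hold, the oracle threshold is never larger than $s^*_{TCPR}$. Coverage of the oracle set then follows from the usual exchangeability argument, and the high-probability event is accounted for multiplicatively.

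\textbf{Step 1 (exchangeability of the oracle scores).} The absolute ranks $R_i^{c+t}$ are computed with respect to the full multiset $\Lbag Y_j\Rbag_{j\in[n+m]}$. The model $A(X_i,\Lbag X_j\Rbag_{j\in[n+m]})$ also depends on the other items only through a multiset. Hence, under Assumption~\ref{ass:exchange}, the vector $(s_1,\dots,s_n,s_{n+j})$ with $s_{n+j}:=s(X_{n+j},R^{c+t}_{n+j})$ is exchangeable; this holds for both the (RA) and (VA) scores. Set
\[
s^*_{\mathrm{or}} := \text{Quantile}\Bigl(\tfrac{\lceil (n+1)(1-\alpha+\delta)\rceil}{n}, \{s_i\}_{i\in[n]}\Bigr).
\]
The standard quantile lemma of split conformal prediction (Appendix~\ref{sec: cp_intro}) then gives
\[
\mathbb{P}(s_{n+j}\le s^*_{\mathrm{or}})\ge 1-\alpha+\delta .
\]

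\textbf{Step 2 (monotonicity on the good event).} Let $E:=\{\forall i\in[n]: R_i^{c+t}\in[R_i^-,R_i^+]\}$, so $\mathbb{P}(E)\ge 1-\delta$ by \eqref{eq:tcpr-rank-bounds}. On $E$, the definition \eqref{eq:tcpr-proxy-score} gives $s_i\le\tilde s_i$ for every $i$. Empirical quantiles at a fixed level are coordinatewise monotone, so $s^*_{\mathrm{or}}\le s^*_{TCPR}$ on $E$. Therefore
\[
\{s_{n+j}\le s^*_{\mathrm{or}}\}\cap E\subseteq\{R^{c+t}_{n+j}\in\mathcal{C}(X_{n+j})\},
\]
which yields
\[
\mathbb{P}\bigl(R^{c+t}_{n+j}\in\mathcal{C}(X_{n+j})\bigr)\ \ge\ \mathbb{P}(s_{n+j}\le s^*_{\mathrm{or}}\mid E)\,\mathbb{P}(E).
\]

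\textbf{Step 3 (the conditional coverage; main obstacle).} To obtain the product form, I need $\mathbb{P}(s_{n+j}\le s^*_{\mathrm{or}}\mid E)\ge 1-\alpha+\delta$. This is the delicate point. $E$ involves only the calibration indices, so conditioning on it may break the exchangeability used in Step~1.

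My first attempt is to show that the oracle-coverage event and $E$ are nonnegatively correlated, or conditionally independent given the bag of all $n+m$ data points. Conditional on that bag, the randomness is only the assignment of items to calibration and test positions. The bounds $(R_i^-,R_i^+)$ are functions of the observed $(X_i,R_i^c)_{i\in[n]}$. In the constructions of Appendix~\ref{sec:rank_bound}, $R_i^t$ given $R_i^c$ is driven by the hypergeometric interleaving of calibration and test labels. Meanwhile, the event $\{s_{n+j}\le s^*_{\mathrm{or}}\}$ is a rank statement about which of the $n+1$ oracle scores is the held-out one. I would try to show that, conditional on the calibration/test partition and on $E$, the test index $n+j$ is still uniformly placed among the $n+1$ oracle scores. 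That would make the conditional quantile lemma apply verbatim.

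If this symmetry argument fails, the fallback is to state the conditional coverage as the standing assumption under which the product bound is claimed. I would note that the union bound always gives at least $1-\alpha$, so the final conclusion is unaffected.

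\textbf{Step 4 (final inequality).} Expanding the product,
\[
(1-\alpha+\delta)(1-\delta)=1-\alpha+\delta(\alpha-\delta)\ge 1-\alpha ,
\]
since $0<\delta<\alpha$. This gives the second inequality of Theorem~\ref{thm:tcpr-validity}.
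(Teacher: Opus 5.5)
Steps 1, 2 and 4 are fine. Step 3 is a genuine gap, and it is exactly the step that carries the product bound. The paper states this result without proof, importing it from TCPR, so your argument has to stand on its own.

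\textbf{Why the symmetry argument fails.} Write $G:=\{s_{n+j}\le s^*_{\mathrm{or}}\}$.

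\emph{Conditioning on the partition.} Condition on the bag of all $N$ items and on the calibration/test partition. Then $E$ is determined, but the index $n+j$ is uniform only over the $m$ test items. The quantile lemma needs $n+j$ to be uniform over the $n+1$ items of calibration set $\cup\{n+j\}$, and that is exactly the randomness this conditioning removes. Indeed, $\mathbb{P}(G\mid \text{bag},\text{calibration set})$ equals the fraction of test items whose score is at most the $\lceil (n+1)(1-\alpha+\delta)\rceil$-th smallest calibration score. This fraction is at least $1-\alpha+\delta$ only on average over a uniformly random calibration set. It need not be on average over the calibration sets that make up $E$.

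\emph{Conditioning on the $(n+1)$-set.} Suppose instead you condition on the bag and on the set $S$ of $n+1$ items occupying the indices $[n]\cup\{n+j\}$. The held-out element is then uniform on $S$, but $E$ depends on which element is held out. Every item above the held-out one has its relative rank $R_i^c$ lowered by one. That changes the envelope $[R_i^-,R_i^+]$ against which its fixed absolute rank is tested. So conditioning on $E$ reweights which element of $S$ is the test point.

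Nothing in Assumption~\ref{ass:exchange} or \eqref{eq:tcpr-rank-bounds} makes this reweighting favor low-score elements. The scores are essentially arbitrary functions of the items through the black box $A$. What you need is $\mathbb{P}(G\cap E)\ge(1-\alpha+\delta)\,\mathbb{P}(E)$. That is a positive-association property between oracle coverage and the envelope event, and it does not follow from exchangeability.

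\textbf{The fallback.} The fallback does not repair this, since taking the conditional coverage as a standing assumption simply assumes what is to be proved. What Steps 1--2 do give, with no conditioning, is
\[
\{s_{n+j}>s^*_{TCPR}\}\subseteq\{s_{n+j}>s^*_{\mathrm{or}}\}\cup E^c ,
\]
so the miscoverage is at most $(\alpha-\delta)+\delta=\alpha$. This is a complete proof of the final inequality $\mathbb{P}(R^{c+t}_{n+j}\in\mathcal{C}(X_{n+j}))\ge 1-\alpha$.

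That bound is strictly weaker than the middle one, because $(1-\alpha+\delta)(1-\delta)=1-\alpha+\delta(\alpha-\delta)>1-\alpha$ whenever $0<\delta<\alpha$. Your proposal therefore establishes the second inequality of the statement but not the first. The first would need an extra ingredient, such as independence or positive correlation between $E$ and the oracle-coverage event, which the stated hypotheses do not provide.
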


However, TCPR is excessively conservative, which stems from its reliance on constructing upper bounds for the calibration non-conformity scores.
Specifically, TCPR replaces the calibration scores $s(X_i, R_i^{c+t})$ with proxy scores $\tilde{s}_i$, which are defined as worst-case values over the rank interval $[R_i^-, R_i^+]$ and thus constitute loose overestimates of the true non-conformity scores.
Furthermore, the rank bounds $[R_i^-, R_i^+]$ must hold uniformly over all calibration items with high probability, which leads to wide rank intervals and further inflates the proxy scores.
Additionally, TCPR adjusts the target coverage level from $1-\alpha$ to $1-\alpha+\delta$ to account for uncertainty in the rank bounds, introducing extra slack.
Together, these factors lead to large thresholds $s^*_{TCPR}$, making TCPR inherently conservative.

\begin{figure}[t]
    \centering
        \includegraphics[width=0.85\linewidth]{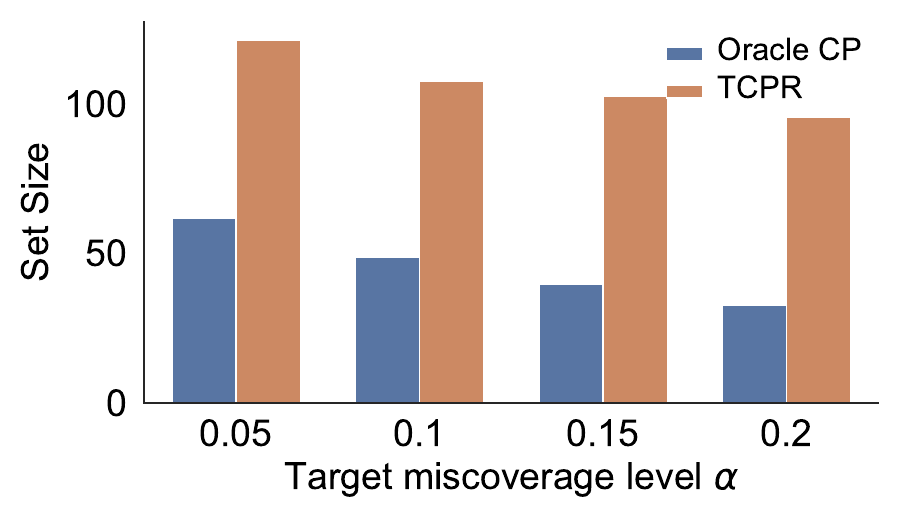}
        \caption{\textbf{Prediction set size of TCPR and oracle CP at different target error levels.}
        The experiment is conducted with RankNet \citep{10.1145/1102351.1102363} on a synthetic dataset.
        For TCPR, rank bounds are constructed using the quantile envelope, as it provides the tightest bound.
        Since both methods achieve the desired coverage, only the prediction set size is presented.}
    \label{fig:tcpr_vs_oracle}
\end{figure}

To provide a straightforward view, we compare the prediction set sizes of TCPR and oracle CP on a synthetic dataset in Figure \ref{fig:tcpr_vs_oracle}, where oracle CP computes non-conformity scores using the true absolute ranks of calibration items.
The experimental setup is detailed in Appendix \ref{app:details}.
The results show that TCPR produces substantially larger prediction sets than oracle CP, highlighting the conservatism of TCPR.
We proceed by introducing our method, which eliminates the need for upper-bound constructions and achieves superior efficiency.



\section{Methodology and theoretical analysis}
\label{sec:method_and_theory}
In this section, we introduce Distribution-informed Conformal Ranking (\textbf{DCR}), a rigorous methodology for quantifying uncertainty in full ranking problems. In contrast to prior approaches that rely on conservative high-probability envelopes to bound unobserved ranks, DCR exploits the exact probabilistic structure of the latent non-conformity scores. This distributional perspective allows us to bypass loose approximations, facilitating the construction of significantly tighter prediction sets while rigorously maintaining the validity of the coverage guarantee.

\paragraph{Probabilistic modeling of unobserved ranks.} 
The core challenge in the transductive ranking setting is that the true non-conformity scores $S_i = s(X_i, R_i^{c+t})$ for the calibration set are effectively latent variables. This is because the absolute rank $R_i^{c+t}$ depends on the relative ordering of item $i$ against the unseen test instances, making the scores unobservable prior to the final prediction. However, the randomness of these ranks is structured rather than arbitrary. Crucially, under the exchangeability assumption (Assumption~\ref{ass:exchange}), we establish that the unobserved portion of the rank follows a known, tractable distribution conditioned on the observed relative rank. We formally establish this relationship in the following proposition.
\begin{proposition}[Conditional Rank Distribution]
\label{prop:nh}
Under Assumption~\ref{ass:exchange}, let $R_i^c$ be the relative rank of a calibration item $i \in [n]$. The random variable $R_i^t$, representing the number of test items ranked lower than item $i$, follows a Negative Hypergeometric distribution $R_i^t \mid R_i^c \sim \text{NegHypergeometric}(N, m, R_i^c)$ with the following probability mass function:
\begin{equation}
    \mathbb{P}(R_i^t = k \mid R_i^c = r) = \frac{\binom{r+k-1}{k} \binom{N-r-k}{m-k}}{\binom{N}{m}}, \quad k \in [m],
\end{equation}
where $N=n+m$ is the total number of items. Consequently, the absolute rank $R_i^{c+t} = R_i^c + R_i^t$ follows a shifted Negative Hypergeometric distribution supported on $\{R_i^c, \dots, R_i^c + m\}$.
\end{proposition}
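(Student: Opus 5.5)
The plan is to reduce the statement to a uniform-random-permutation computation. By Assumption~\ref{ass:exchange}, the scores $(Y_i)_{i\in[N]}$ are exchangeable with no ties almost surely. Hence the vector of overall ranks $(R_1^{c+t},\dots,R_N^{c+t})$ is a uniformly random permutation of $[N]$. The argument: for any permutation $\pi$ of $[N]$, the reindexed vector $(Y_{\pi(i)})_i$ has the same law as $(Y_i)_i$, so every ordering has the same probability. Equivalently, the set of overall positions occupied by the $m$ test items is a uniformly random $m$-subset of $[N]$. This holds independently of which positions the individual calibration items take within the calibration set. This reduction is the only place exchangeability enters, so I will state it carefully as a lemma. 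Dependence on the covariates $X_i$ is irrelevant, since we condition only on ranks.

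Next I encode the sorted sequence of all $N$ items as a binary word of length $N$, with each letter marking whether the item is calibration or test. The previous paragraph says this word is uniform over the $\binom{N}{m}$ words with exactly $m$ test letters. Fix a calibration item $i$ and condition on $R_i^c=r$. The item is then the $r$-th calibration letter in the word. By exchangeability, the labelling of calibration letters is independent of the word, so conditioning on $R_i^c=r$ leaves the word uniform.

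The event $R_i^t=k$ means the $r$-th calibration letter has exactly $k$ test letters before it. Count the words realizing this event:
\begin{itemize}
\item the first $r+k$ letters must end with a calibration letter and contain $r-1$ other calibration letters and $k$ test letters, giving $\binom{r+k-1}{k}$ arrangements;
\item the remaining $N-r-k$ letters must contain the other $m-k$ test letters, giving $\binom{N-r-k}{m-k}$ arrangements.
\end{itemize}
Dividing by $\binom{N}{m}$ yields the stated mass function. This is exactly the Negative Hypergeometric law: the number of test draws seen before the $r$-th calibration draw, sampling without replacement from an urn with $n$ calibration and $m$ test items.

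The shifted statement for $R_i^{c+t}=R_i^c+R_i^t$ then follows immediately, with support $\{r,\dots,r+m\}$. One point needs care: the support of $k$ is really $\{0,\dots,m\}$ rather than $[m]$. I would note this explicitly, check by Vandermonde-type summation that the masses over $k=0,\dots,m$ sum to one, and read the statement's $k\in[m]$ accordingly.

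I expect the main obstacle to be justifying the uniformity claim from exchangeability alone. One must show that the joint ordering labels are uniform, not merely that the marginal ranks are. One must also show that conditioning on the calibration item's relative rank does not bias the interleaving pattern. Both follow from applying exchangeability to permutations within the calibration block and within the test block separately.
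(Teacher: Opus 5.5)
Your proposal is correct and follows essentially the same route as the paper: reduce to a uniformly random calibration/test interleaving of the sorted sequence, then count the arrangements in which the $r$-th calibration position sits at absolute position $r+k$, which gives $\binom{r+k-1}{k}\binom{N-r-k}{m-k}/\binom{N}{m}$ with $k\in\{0,\dots,m\}$ (the paper's proof also uses this corrected range). You justify more carefully than the paper that conditioning on $R_i^c=r$ leaves the interleaving uniform, which the paper uses implicitly; however, your closing remark is off, since within-block permutations only give that independence, while uniformity of the interleaving itself needs the cross-block permutations $\pi$ of $[N]$ that your first paragraph already invokes.
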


The detailed proof of Proposition~\ref{prop:nh} is provided in Appendix \ref{app:proof_nh}. With the distributional characterization of the absolute ranks established, we can now derive the induced distribution of the non-conformity scores. 

Given the predicted absolute rank $\hat{R}_i^{c+t} = A(X_i)$ from the ranking model, the non-conformity score $S_i = s(X_i, R_i^{c+t})$ is a function of the random absolute rank $R_i^{c+t}$. Since $R_i^{c+t} \mid R_i^c$ follows a shifted Negative Hypergeometric distribution, the score $S_i$ follows the distribution induced by this underlying ranking process. Specifically, in the RA setting, $S_i \mid R_i^c \sim |R_i^{c+t} - \hat{R}_i^{c+t}|$, while in the VA setting, $S_i \mid R_i^c \sim |\mathrm{R}^{-1}(R_i^{c+t}) - \mathrm{R}^{-1}(\hat{R}_i^{c+t})|$, where $R_i^{c+t} \sim R_i^c + \text{NegHypergeometric}(N, m, R_i^c)$.

\paragraph{Deriving the latent score distribution.}
To carry out the standard conformal prediction procedure and identify the critical threshold, we ideally aim to compute the empirical cumulative distribution function (CDF) of the calibration non-conformity scores:
\begin{equation}
\label{eq:f_cal}
\hat{F}_{\text{cal}}(t) := \frac{1}{n} \sum_{i=1}^n \mathbb{I}\{S_i \le t\}.
\end{equation}
However, unlike standard conformal prediction, $\hat{F}_{\text{cal}}(t)$ is not a deterministic function observable from the data. Since the scores $S_i$ depend on the unobserved ranks $R_i^{c+t}$, $\hat{F}_{\text{cal}}(t)$ is itself a random process.

For each calibration item $i$, let $F_i(t) := \mathbb{P}(S_i \le t \mid R_i^c)$ denote the conditional CDF of its non-conformity score given the observed relative rank $R_i^c$ (which can be computed exactly using the distribution from Proposition~\ref{prop:nh}). To address this, we construct the target distribution by considering the expectation of this latent empirical distribution, conditioned on the observable relative ranks $\{R_i^c\}_{i=1}^n$. We define the \textit{Mixture CDF}, $F_{\text{mix}}(t)$, as follows:
\begin{equation}
\label{eq:f_mix_deriv}
\begin{aligned}
    F_{\text{mix}}(t) &:= \mathbb{E}\left[ \hat{F}_{\text{cal}}(t) \mid \{R_i^c\}_{i=1}^n \right] \\
    &= \mathbb{E}\left[ \frac{1}{n} \sum_{i=1}^n \mathbb{I}\{S_i \le t\} \mid \{R_i^c\}_{i=1}^n \right] \\
    &= \frac{1}{n} \sum_{i=1}^n \mathbb{P}(S_i \le t \mid R_i^c) \\
    &= \frac{1}{n} \sum_{i=1}^n F_i(t)
\end{aligned}
\end{equation}
This derivation reveals that the mixture distribution used in DCR is theoretically grounded: it represents the mean distribution of the non-conformity scores given the observed relative ranks. By using $F_{\text{mix}}$ to determine the threshold, we effectively marginalize out the aleatoric uncertainty of the unobserved absolute ranks.

\paragraph{Distribution-informed conformal ranking (DCR).}
We select the threshold $s^*$ as the smallest value such that the mixture probability mass exceeds the target coverage level:
\begin{equation}
\label{eq:threshold_DCR}
    s^* = \inf \left\{ t \in \mathbb{R} : F_{\text{mix}}(t) \ge \frac{\lceil(n+1)(1-\alpha)\rceil}{n+1} \right\}.
\end{equation}
Using this threshold, the prediction set for a test item $X_{n+j}$ is constructed as $\mathcal{C}(X_{n+j}) = \{ r \in [N] : s(X_{n+j},r) \le s^* \}$. The procedure is summarized in Algorithm~\ref{alg:DCR}.

\begin{algorithm}[h]
   \caption{Distribution-informed Conformal Ranking (DCR)}
   \label{alg:DCR}
\begin{algorithmic}[1]
   \STATE {\bfseries Input:} Calibration data $\{(X_i, R_i^c)\}_{i=1}^n$, Test data $\{X_{n+j}\}_{j=1}^m$, Ranking Model $A$, Error rate $\alpha$.
   \STATE \textbf{Step 1: Compute Calibration Distributions}
   \FOR{$i=1$ {\bfseries to} $n$}
       \STATE Get predicted rank $\hat{R}_i^{c+t} \leftarrow A(X_i)$.
       \STATE Construct distribution of true rank $R_i^{c+t}$ using Prop.~\ref{prop:nh}.
       \STATE Compute CDF $F_i(t)$ for variable $S_i = s(X_i,R_i^{c+t})$.
   \ENDFOR
   \STATE \textbf{Step 2: Compute Threshold}
   \STATE Compute Mixture CDF: $\hat{F}_{mix}(t) \leftarrow \frac{1}{n} \sum_{i=1}^n F_i(t)$.
   \STATE Find $s^* \leftarrow \inf \{ t : \hat{F}_{mix}(t) \ge \frac{\lceil(n+1)(1-\alpha)\rceil}{n+1} \}$.
   \STATE \textbf{Step 3: Construct Prediction Sets}
   \FOR{$j=1$ {\bfseries to} $m$}
       \STATE Get predicted rank $\hat{R}_{n+j}^{c+t} \leftarrow A(X_{n+j})$.
       \STATE $\hat{\mathcal{C}}_{n+j} \leftarrow \{ r \in [N] : s(X_{n+j},r) \le s^* \}$.
   \ENDFOR
   \STATE {\bfseries Return} $\{\hat{\mathcal{C}}_{n+j}\}_{j=1}^m$
\end{algorithmic}
\end{algorithm}

We establish the marginal validity of DCR in the following theorem, the proof of which is provided in Appendix \ref{app:proof_validity}.
\begin{theorem}[Marginal Validity]
\label{thm:validity}
If Assumption \ref{ass:exchange} holds, for any target error rate $\alpha \in (0, 1)$, the prediction set $\hat{\mathcal{C}}_{n+j}$ constructed by DCR (Algorithm~\ref{alg:DCR}) satisfies:
\begin{equation}
    \mathbb{P}(R_{n+j}^{c+t} \in \hat{\mathcal{C}}_{n+j}) \ge 1 - \alpha,
\end{equation}
where the probability is taken over the randomness of both the calibration and test data.
\end{theorem}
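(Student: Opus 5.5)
The plan is to reduce the statement to a comparison between the test item's score and the calibration scores. The comparison must use only the information that DCR actually sees. Write $S_k := s(X_k, R_k^{c+t})$ for all $k\in[N]$, so that $R_{n+j}^{c+t}\in\hat{\mathcal{C}}_{n+j}$ iff $S_{n+j}\le s^*$. Because $A$ takes the full multiset $\Lbag X_j\Rbag_{j\in[N]}$ as input, the map $(X_k,R_k^{c+t})\mapsto S_k$ is the same symmetric function for every item. Assumption~\ref{ass:exchange} therefore makes $(S_1,\dots,S_N)$ exchangeable. I will condition on the unordered multiset $\mathcal{Z}=\Lbag (X_k,Y_k)\Rbag_{k\in[N]}$. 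Given $\mathcal{Z}$, the full score profile $\{S_k\}$ is fixed, and the only remaining randomness is the uniformly random split of the $N$ items into $n$ calibration and $m$ test items. In this picture Proposition~\ref{prop:nh} is exactly the conditional law of $R_i^t$ induced by the random split, and $F_i$, $F_{\text{mix}}$ and $s^*$ are functions of the calibration part of the split.

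\textbf{Step 1 (counting identity).} By symmetry among test indices,
\[
\mathbb{P}(S_{n+j}>s^*)=\mathbb{E}\Big[\tfrac1m\textstyle\sum_{j'=1}^m \mathbf{1}\{S_{n+j'}>s^*\}\Big].
\]
Given $\mathcal{Z}$, for every $t$ the total $T(t)=\sum_{k=1}^N\mathbf{1}\{S_k>t\}$ is deterministic. Hence the number of test exceedances equals $T(t)$ minus the number of calibration exceedances. This turns the test miscoverage into a statement about calibration exceedances at the data-dependent level $t=s^*$.

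\textbf{Step 2 (tower property).} By \eqref{eq:f_mix_deriv}, $\mathbb{E}[\sum_{i\le n}\mathbf{1}\{S_i\le t\}\mid \{R_i^c\}]=nF_{\text{mix}}(t)$. Since $s^*$ is measurable with respect to the observed data, I want to evaluate this at $t=s^*$. The definition \eqref{eq:threshold_DCR} then gives $F_{\text{mix}}(s^*)\ge\lceil(n+1)(1-\alpha)\rceil/(n+1)$. So the expected fraction of calibration scores below $s^*$ is at least the usual conformal level.

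\textbf{Step 3 (transfer to the test point).} I then need to pass from ``many calibration scores lie below $s^*$'' to ``the test score lies below $s^*$ with probability $\ge 1-\alpha$''. The first approach I will try is the standard full-conformal quantile argument. Introduce the oracle threshold $q$, the $\lceil(n+1)(1-\alpha)\rceil$-th smallest of $\{S_1,\dots,S_n,+\infty\}$, for which exchangeability gives $\mathbb{P}(S_{n+j}\le q)\ge1-\alpha$. I will then try to show that $s^*$ is no smaller than $q$ in the relevant averaged sense: $F_{\text{mix}}$ averages the indicator functions $\mathbf{1}\{S_i\le t\}$ over the conditional rank law, and the DCR quantile is taken at level $\lceil(n+1)(1-\alpha)\rceil/(n+1)$ rather than $\lceil(n+1)(1-\alpha)\rceil/n$.

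\textbf{Main obstacle.} Step 3 is the step I expect to be hardest. The difficulty is that $s^*$ depends on the conditional expectation of the calibration indicators, not on their realizations. As a result, $s^*\ge q$ need not hold pathwise, and the event $\{S_{n+j}\le s^*\}$ is not directly exchangeable between calibration and test indices. If the pathwise comparison fails, my fallback is the following argument. Fix $\mathcal{Z}$ and use Step 1 to write the miscoverage as $\frac1m\mathbb{E}[T(s^*)-\#\{i\le n:S_i>s^*\}]$. Then bound $T(s^*)$ by the exact hypergeometric structure of the split: conditional on $\{R_i^c\}$, the exceedance counts among calibration and test items are jointly determined by Proposition~\ref{prop:nh}. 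This reduces the inequality to checking that $1-F_{\text{mix}}(s^*)\le \alpha$ forces the test-exceedance proportion to be at most $\alpha$ in expectation.
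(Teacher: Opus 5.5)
\textbf{There is a genuine gap, and it cannot be repaired.} The failure is already in your Step~2.

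\emph{Why Step~2 fails.} The identity $\mathbb{E}[\sum_{i\le n}\mathbf{1}\{S_i\le t\}\mid\{R_i^c\}]=nF_{\text{mix}}(t)$ does not hold for the $F_i$ that Algorithm~\ref{alg:DCR} computes. That $F_i$ is the law of $s(X_i,R_i^c+K)$, where $K\sim\text{NegHypergeometric}(N,m,R_i^c)$ is drawn independently of the plugged-in model output $\hat R_i^{c+t}=A(X_i)$. So $F_i$, and hence $s^*$, depends on $R_i^c$ \emph{and} on the model outputs. Proposition~\ref{prop:nh}, however, only describes $R_i^t$ given $R_i^c$. To evaluate the tower property at $t=s^*$ you must also condition on $\hat R_i^{c+t}$, and once you do, $R_i^t$ is no longer Negative Hypergeometric whenever the model is informative. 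In your own picture: with $\mathcal{Z}$ fixed, knowing $X_i$ typically identifies the item and hence its absolute rank. So neither the tower step nor the hypergeometric counting in your fallback is available.

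\emph{The statement is false under Assumption~\ref{ass:exchange} alone.} Take $N=3$, $n=1$, $m=2$, $X_k=Y_k$ i.i.d.\ uniform on $[0,1]$, the RA model $A(X_i)=4-\mathrm{R}(X_i,\Lbag X_k\Rbag_{k\in[3]})=4-R_i^{c+t}$, and $\alpha=1/2$. The DCR level in \eqref{eq:threshold_DCR} is then $\lceil 2\cdot\tfrac12\rceil/2=\tfrac12$. Here $R_1^c=1$ and $1+K$ is uniform on $\{1,2,3\}$. So $F_1$ puts mass $1/3$ on each of $2,1,0$, or $1,0,1$, or $0,1,2$, according as $\hat R_1^{c+t}=3,2,1$. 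In every case $F_1(0)=1/3<1/2\le F_1(1)$, so $s^*\equiv1$. The true test score is $|2R_{n+j}^{c+t}-4|$ with $R_{n+j}^{c+t}$ uniform on $\{1,2,3\}$. Hence $\mathbb{P}(R_{n+j}^{c+t}\in\hat{\mathcal{C}}_{n+j})=\mathbb{P}(R_{n+j}^{c+t}=2)=1/3<1/2$. This is not an $n=1$ artifact: the analogous model $A(X_i)=5-R_i^{c+t}$ with $n=m=2$ and $\alpha=1/3$ gives coverage $1/2<2/3$. Your oracle comparison fails accordingly: $q=S_1=2$ with probability $2/3$, while $s^*\equiv1$.

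\emph{The paper's proof does not supply the missing idea either.} It writes $\mathbb{P}(S_I\le s)=\frac1n\sum_j\mathbb{P}(S_j\le s)=\frac1n\sum_j F_j(s)$ and then $\mathbb{P}(S_{n+j}\le s^*)=F_{\text{mix}}(s^*)$. Both steps are correct only for the idealized quantity $F_i(t)=\mathbb{P}(S_i\le t\mid R_i^c)$ with the covariates integrated out. Because $\{R_i^c\}_{i\in[n]}=[n]$, that $F_{\text{mix}}$ is the deterministic marginal CDF of a single score, $s^*$ is a constant, and validity is immediate. But that object is not computable from data and is not what Algorithm~\ref{alg:DCR} uses. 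With the plug-in $F_i$, $F_{\text{mix}}$ is a data-dependent function that need not describe the law of $S_{n+j}$. In the example $s^*$ is even deterministic, yet coverage is $1/3$.

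This is the obstruction you were circling under ``Main obstacle'', and your diagnosis was right. The correct conclusion is that marginal validity of DCR needs additional hypotheses or a modified procedure, not a sharper proof. The natural hypothesis would be that the plug-in $F_i$ is the true conditional law of $S_i$ given what DCR observes, and that already fails whenever $A(X_i)$ carries information about $R_i^t$.
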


\paragraph{Efficiency and comparison with TCPR.}
DCR addresses the conservatism in TCPR~\citep{fermanian2025transductive} by utilizing the exact distribution of non-conformity scores. TCPR relies on a high-probability envelope $[R_i^-, R_i^+]$ and uses the maximum possible score within this interval, which leads to unnecessarily large prediction sets. In contrast, DCR integrates over the rank distribution to determine the threshold. We formally establish the efficiency advantage of DCR over the TCPR baseline in the following theorem, with the detailed proof provided in Appendix \ref{app:proof_efficiency}.
\begin{theorem}[Efficiency Guarantee]
\label{thm:efficiency}
Let $s^*_{TCPR}$ be the threshold computed by TCPR~\citep{fermanian2025transductive} and $s^*_{DCR}$ be the threshold computed by DCR. We have:
\begin{equation}
    s^*_{DCR} \le s^*_{TCPR}.
\end{equation}
Consequently, the prediction set size satisfies $|\hat{\mathcal{C}}^{DCR}| \le |\hat{\mathcal{C}}^{TCPR}|$.
\end{theorem}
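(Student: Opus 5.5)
The plan is to show that the DCR mixture CDF already reaches the DCR target level at the TCPR threshold, that is,
\[
F_{\text{mix}}(s^*_{TCPR}) \;\ge\; \frac{\lceil (n+1)(1-\alpha)\rceil}{n+1}.
\]
By the infimum in \eqref{eq:threshold_DCR}, this immediately gives $s^*_{DCR}\le s^*_{TCPR}$. The statement about set sizes then follows by monotonicity of the construction: for every test item, $\{r: s(X_{n+j},r)\le s^*_{DCR}\}\subseteq\{r: s(X_{n+j},r)\le s^*_{TCPR}\}$.

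\textbf{Step 1: a per-item lower bound on $F_i$.} By the definition \eqref{eq:tcpr-proxy-score} of $\tilde s_i$, on the event $\{R_i^{c+t}\in[R_i^-,R_i^+]\}$ we have $S_i\le \tilde s_i$. Since $R_i^\pm$ depend only on observed data, conditioning on the observed ranks gives, for every $t$,
\[
F_i(t)\;\ge\;\mathbb{P}\big(R_i^{c+t}\in[R_i^-,R_i^+]\,\big|\,R_i^c\big)\,\mathbf 1\{\tilde s_i\le t\}.
\]
I will require that the envelope has conditional coverage at least $1-\delta$ for each $i$. This is the per-item, conditional version of \eqref{eq:tcpr-rank-bounds}, and it can be checked directly from the Negative Hypergeometric law of Proposition~\ref{prop:nh}: the TCPR envelopes are built so that the uniform event holds with probability $1-\delta$, and the per-item conditional coverage should then hold as well. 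Granting this, $F_i(t)\ge(1-\delta)\mathbf 1\{\tilde s_i\le t\}$.

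\textbf{Step 2: average and plug in $s^*_{TCPR}$.} Averaging the Step~1 bound over $i$ gives $F_{\text{mix}}(t)\ge (1-\delta)\,\frac1n\#\{i:\tilde s_i\le t\}$. By the quantile definition \eqref{eq:threshold}, at least $\lceil (n+1)(1-\alpha+\delta)\rceil$ of the $\tilde s_i$ lie below $s^*_{TCPR}$. Hence
\[
F_{\text{mix}}(s^*_{TCPR})\ge (1-\delta)\frac{(n+1)(1-\alpha+\delta)}{n}.
\]
Now use the identity $(1-\delta)(1-\alpha+\delta)=1-\alpha+\delta(\alpha-\delta)$, which is at least $1-\alpha$ because $\delta<\alpha$. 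This is the same algebra as in Theorem~\ref{thm:tcpr-validity}.

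\textbf{Step 3, the main obstacle: handling the ceiling.} The DCR target $\lceil (n+1)(1-\alpha)\rceil/(n+1)$ can exceed $1-\alpha$ by up to $1/(n+1)$. I must therefore show
\[
\Big(1-\alpha+\delta(\alpha-\delta)\Big)\frac{n+1}{n}\;\ge\;\frac{\lceil (n+1)(1-\alpha)\rceil}{n+1}.
\]
The left side exceeds $1-\alpha$ by $(1-\alpha)/n+\delta(\alpha-\delta)(n+1)/n$, so the inequality holds whenever this slack dominates the rounding gap of at most $1/(n+1)$. I would first try to close the gap without extra conditions using integrality. The count $k:=\#\{\tilde s_i\le s^*_{TCPR}\}$ is an integer with $k\ge \lceil (n+1)(1-\alpha+\delta)\rceil$, which is at least $\lceil (n+1)(1-\alpha)\rceil+\lfloor (n+1)\delta\rfloor$. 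I would then compare $(1-\delta)k/n$ with $\lceil (n+1)(1-\alpha)\rceil/(n+1)$ directly.

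If that comparison fails in degenerate regimes (very small $\delta n$), I would state the result under the mild condition $(1-\alpha)/n+\delta(\alpha-\delta)\ge 1/(n+1)$, which holds for all practical $n$. In that case I would note that the claimed set-size inequality inherits the same condition.
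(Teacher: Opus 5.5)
Your Steps 1--2 follow the paper's proof. Step 1 is Lemma~\ref{lemma:A2}: $\tilde s_i$ dominates the $(1-\delta)$-quantile of $F_i$. Step 2 is Lemma~\ref{lemma:A1}: average over the items with $\tilde s_i$ below the threshold and use $(1-\delta)(1-\alpha+\delta)=1-\alpha+\delta(\alpha-\delta)$.

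The rounding issue you isolate in Step 3 is genuine, and the paper's proof does not handle it. The paper ends its chain with $\mathrm{Quantile}(1-\alpha,F_{\text{mix}})=s^*_{DCR}$, which ignores the level $\lceil (n+1)(1-\alpha)\rceil/(n+1)$ in \eqref{eq:threshold_DCR}. It also counts only $\lceil n(1-\alpha+\delta)\rceil$ items, which gives just $F_{\text{mix}}(T)>1-\alpha$. Keeping TCPR's factor $n+1$, as you do, is necessary. With the paper's count, $n=108$, $\alpha=0.1$, $\delta=10^{-3}$ give the bound $0.999\cdot 98/108\approx 0.9065$, below the DCR level $99/109\approx 0.9083$.

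Step 3 is where your write-up stops short, but the integrality comparison you propose closes it with no extra hypothesis. So drop the fallback condition. It is also not mild: with $\alpha=0.1$, $\delta=10^{-3}$ the condition you wrote fails for every $n$ from $10$ to $999$.

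Here is the completion.
\begin{itemize}
\item If $\lceil (n+1)(1-\alpha+\delta)\rceil>n$, then $s^*_{TCPR}=+\infty$ and there is nothing to prove.
\item Otherwise set $\ell=\lceil (n+1)(1-\alpha)\rceil$, $x=(n+1)\delta$ and $d=k-\ell\ge 0$.
\item You already have $d\ge\lfloor x\rfloor>x-1$. The missing ingredient is the trivial bound $\ell+d=k\le n$.
\item The target $(1-\delta)k/n\ge \ell/(n+1)$ is equivalent to $(n+1)(1-\delta)k\ge n\ell$, i.e.\ to $d(n+1-x)\ge \ell(x-1)$.
\item This is immediate when $x\le 1$. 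When $x>1$, use $\ell\le n-d$ and
\[
d(n+1-x)-(n-d)(x-1)=n(d-x+1)>0 .
\]
\end{itemize}
Hence $F_{\text{mix}}(s^*_{TCPR})\ge \lceil (n+1)(1-\alpha)\rceil/(n+1)$ always holds, and $s^*_{DCR}\le s^*_{TCPR}$ follows from \eqref{eq:threshold_DCR}.

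The per-item coverage you grant in Step 1 (which the paper also just asserts) has a short justification.
\begin{itemize}
\item TCPR's interval for item $i$ depends on $R_i^c$ alone, and the relative ranks are a permutation of $[n]$.
\item So the uniform event in \eqref{eq:tcpr-rank-bounds} is the intersection over $r\in[n]$ of the events that the absolute rank of the $r$-th calibration item lies in $[R_r^-,R_r^+]$.
\item That absolute rank has exactly the shifted Negative Hypergeometric law of Proposition~\ref{prop:nh}, which is the law DCR uses for $F_i$ when $R_i^c=r$.
\item Hence each slot has coverage at least $1-\delta$ under that law, so $F_i(\tilde s_i)\ge 1-\delta$ as a statement about the computed CDF.
\end{itemize}
Read Step 1 that way, not as a conditional probability given the data, since the covariates carry information about test ranks. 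For the Monte Carlo envelopes this holds only up to simulation error, a caveat the paper also ignores.
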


\medskip
\begin{remark}[Analysis of Conservatism vs. Oracle]\label{rem:conservatism}
While Theorem \ref{thm:efficiency} proves DCR is more efficient than TCPR, it remains conservative relative to the Oracle. This gap stems from the rank uncertainty modeled by the Negative Hypergeometric (NH) distribution. Specifically, in regimes of high model precision or when the calibration set is small relative to the test set ($n \ll m$), the mixture distribution exhibits significantly heavier tails compared to the realized distribution used by the Oracle. Consequently, DCR yields a higher conformal threshold to ensure validity without access to ground-truth ranks.
\end{remark}

\paragraph{Finite-Sample FCP concentration.}
Beyond marginal validity, we control the Finite-Sample False Coverage Proportion (FCP), defined as the realized fraction of uncovered test items: $\text{FCP} := \frac{1}{m} \sum_{j=1}^m \mathbb{I}\{ R_{n+j}^{c+t} \notin \hat{\mathcal{C}}_{n+j} \}$. While marginal validity ensures coverage on average, in high-stakes ranking applications, it is crucial to bound the deviation of the realized FCP from the target level $\alpha$. In the following theorem, we show that the realized FCP is well-concentrated around its expectation for large sample sizes. The detailed proof is given in Appendix \ref{app:proof_fcp}.

\begin{theorem}[Finite-Sample FCP Concentration]
\label{thm:fcp_bound}
There exists a universal constant $C > 0$ such that for any $\beta \in (0,1)$, with probability at least $1-\beta$, the FCP on the test set is bounded by:
\begin{equation}
    \text{FCP} \le \alpha + C \left( \sqrt{\frac{\ln(4/\beta)}{n}} + \sqrt{\frac{\ln(4/\beta)}{m}} \right).
\end{equation}
\end{theorem}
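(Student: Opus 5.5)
We decompose the excess of the FCP over $\alpha$ into a calibration-side error, which shrinks at rate $n^{-1/2}$, and a test-side error, which shrinks at rate $m^{-1/2}$. The two failure events get probability $\beta/2$ each, and each is in turn split over two tails of size $\beta/4$. This is where the $\ln(4/\beta)$ comes from.

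Introduce the population score distribution. Let $F(t) := \mathbb{P}(S \le t)$ be the marginal CDF of a single non-conformity score $S = s(X, R^{c+t})$. By exchangeability (Assumption~\ref{ass:exchange}) this CDF is common to calibration and test items, up to the dependence of $s$ on the bag $\Lbag X_j\Rbag_{j\in[N]}$, which is symmetric.

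Conditional on the threshold $s^*$ and the bag, a test item $j$ is uncovered exactly when $s(X_{n+j}, R^{c+t}_{n+j}) > s^*$. Hence
\[
\text{FCP} = 1 - \hat{F}_{\text{test}}(s^*), \qquad \hat{F}_{\text{test}}(t) := \frac{1}{m}\sum_{j=1}^m \mathbb{I}\{S_{n+j}\le t\}.
\]
We then write
\[
\text{FCP} - \alpha = \bigl[F(s^*) - \hat{F}_{\text{test}}(s^*)\bigr] + \bigl[F_{\text{mix}}(s^*) - F(s^*)\bigr] + \bigl[1-\alpha - F_{\text{mix}}(s^*)\bigr].
\]
The third bracket is $\le 0$ by the definition \eqref{eq:threshold_DCR}, since $F_{\text{mix}}(s^*) \ge \lceil (n+1)(1-\alpha)\rceil/(n+1) \ge 1-\alpha$.

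\emph{Step 1: the test term.} The first bracket is controlled uniformly in $t$ by a DKW-type inequality. The test scores are not i.i.d., since the ranks $R^{c+t}_{n+j}$ are coupled. However, the items are a uniformly random split of $N$ exchangeable items, so we condition on the bag and view the test set as a sample without replacement. For such samples Serfling's inequality, or the DKW inequality for sampling without replacement, gives
\[
\sup_t |\hat{F}_{\text{test}}(t) - \bar F(t)| \le C\sqrt{\ln(4/\beta)/m}
\]
with probability $1-\beta/2$. Here $\bar F$ is the empirical score CDF over all $N$ items.

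\emph{Step 2: the calibration term.} We need
\[
\sup_t |F_{\text{mix}}(t) - \bar F(t)| \lesssim \sqrt{\ln(4/\beta)/n} + \sqrt{\ln(4/\beta)/m}.
\]
We split it as $|F_{\text{mix}} - \hat F_{\text{cal}}| + |\hat F_{\text{cal}} - \bar F|$. The second piece is again sampling-without-replacement DKW, now at rate $n^{-1/2}$. For the first piece, recall $F_{\text{mix}}(t) = \mathbb{E}[\hat F_{\text{cal}}(t)\mid \{R^c_i\}]$ from \eqref{eq:f_mix_deriv}. We therefore need concentration of $\hat F_{\text{cal}}(t)$ around its conditional mean given the relative ranks.

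Given $\{R_i^c\}$, the unobserved randomness is the interleaving of the $m$ test items among the $n$ ordered calibration items. This is a uniformly random composition, equivalently a uniformly random $m$-subset of $[N]$. The quantity $\hat F_{\text{cal}}(t)$ is a function of the gap counts $R_i^t$, which are monotone in $i$ (Proposition~\ref{prop:nh} gives their marginals). We apply McDiarmid's inequality to a martingale that reveals the test positions one at a time. Each revelation shifts the absolute ranks of a contiguous block of calibration items by one.

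For a residual-type score, shifting ranks by one changes the count $\sum_i \mathbb{I}\{S_i \le t\}$ by at most $O(1)$ per threshold crossing. This is plausible because $r\mapsto s(X_i,r)$ is a V-shaped function of $r$ in both the RA and VA settings. The differences of the martingale are then bounded by $O(1/n)$. Uniformity over $t$ then costs only a constant, via a union bound over the at most $N$ distinct values of $t$ where the CDF jumps, or via a chaining/monotonicity argument.

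\textbf{Main obstacle.} The hard part is Step 2: controlling the conditional fluctuation of $\hat F_{\text{cal}}$ around $F_{\text{mix}}$ despite the strong dependence between the $R_i^t$. The crude bounded-difference constant may scale like $m/n$ per step, which would give a rate of the wrong form. If that happens, we fall back on the explicit structure. We would represent the interleaving via the i.i.d.-uniform (Pólya urn / order statistic) coupling. In that representation $R^t_i \approx m\,U_{(R_i^c)}$, with $U$ i.i.d.\ uniforms. The empirical-process fluctuation then inherits the DKW rate in both $n$ and $m$.

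Finally, a union bound over the four events (two tails for each of Step 1 and Step 2), each of probability $\beta/4$, and collecting constants into a universal $C$ yields the claimed bound.
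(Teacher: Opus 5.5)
\textbf{There is a genuine gap in Step 2.} Your Step 1 and the second half of Step 2 together make up the paper's whole proof. The paper pivots on the CDF $F_N$ of the $N$ realized scores and applies the sampling-without-replacement DKW bound to the calibration and test parts, with failure probability $\beta/2$ each. It never mentions $F_{\text{mix}}$, because it starts from $\hat{F}_{\text{cal}}(s^*)\ge 1-\alpha$. That is, it treats $s^*$ as the empirical quantile of the \emph{realized} calibration scores. The link you add to handle the actual DCR threshold, $\sup_t|F_{\text{mix}}(t)-\hat{F}_{\text{cal}}(t)|\lesssim n^{-1/2}+m^{-1/2}$, is where the proposal fails. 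It fails as a bias rather than a fluctuation, so neither McDiarmid nor the urn coupling can repair it.

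Each $F_i$ is computed with $X_i$, hence $r\mapsto s(X_i,r)$, held fixed, while $R_i^t$ is drawn from the Negative Hypergeometric law. That law is the law of $R_i^t$ given $R_i^c$ \emph{alone}. Once the covariates are fixed, the interleaving is no longer a uniform $m$-subset. The covariates also determine every $\widehat{R}_i^{c+t}$ through the bag, and they are informative about where the test responses fall. Your bounded-difference argument freezes the score functions and randomizes the interleaving uniformly. So what it actually controls is the empirical CDF of $s(X_i,R_i^c+\tilde{R}_i^t)$ under a fictitious interleaving independent of the covariates (essentially MDCR's proxy scores), not the realized $\hat{F}_{\text{cal}}$.

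Already a perfect model breaks the two-sided claim: take $X_i=Y_i$, $\widehat{R}_i^{c+t}=R_i^{c+t}$ and the RA score. Every realized calibration score is $0$, so $\hat{F}_{\text{cal}}\equiv 1$ on $[0,\infty)$. Meanwhile $F_{\text{mix}}$ spreads its mass over a window the size of the Negative Hypergeometric standard deviation, so the sup-distance does not vanish as $n,m$ grow. This is the phenomenon described in the paper's Remark on conservatism.

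For accurate models the gap has the harmless sign, and all you really need is the one-sided bound $\hat{F}_{\text{cal}}(s^*)\ge F_{\text{mix}}(s^*)-o(1)$. With no assumption on $A$, even this fails.
\begin{itemize}
\item Take $X_i=(Y_i,B_i)$ with $B_i$ i.i.d.\ Bernoulli$(p)$ and $\alpha<p<2\alpha$.
\item Use the RA model $\widehat{R}_i^{c+t}=R_i^{c+t}+HB_i$ with $H=\lfloor N/4\rfloor$. The model reads $Y$ off the covariates, so this is a legitimate function of $X_i$ and the bag.
\item The realized scores are $HB_i$.
\item For $B_i=0$, $F_i(H-1)$ is close to $1$.
\item For $B_i=1$, $F_i(H-1)$ is essentially the probability that an independent draw from the shifted Negative Hypergeometric law lands above the realized rank, which is about $1/2$ on average.
\end{itemize}
Hence $F_{\text{mix}}(H-1)\approx 1-p/2>1-\alpha$, so $s^*<H$ and every test item with $B_i=1$ is uncovered. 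This gives $\text{FCP}\approx p>\alpha$ with probability bounded away from zero as $n\asymp m\to\infty$.

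So Step 2 cannot be closed by any concentration argument. You need one of two things:
\begin{itemize}
\item the oracle-type premise $\hat{F}_{\text{cal}}(s^*)\ge 1-\alpha$ on which the paper's argument rests, which the $F_{\text{mix}}$-based threshold does not satisfy in general; or
\item an extra hypothesis on the model guaranteeing $\hat{F}_{\text{cal}}(t)\ge F_{\text{mix}}(t)-O(n^{-1/2})$ uniformly in $t$ with high probability.
\end{itemize}
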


\medskip
\begin{remark}[Asymptotic Validity]\label{rem:asyptotic}
The bound in Theorem~\ref{thm:fcp_bound} implies that the deviation term vanishes as $n, m \to \infty$. In the large-sample regime, the probability of the FCP exceeding the target level $\alpha$ by any non-negligible margin tends to zero. In other words, the risk of the prediction sets having coverage strictly lower than $1-\alpha$ vanishes asymptotically, providing a strong validity guarantee.
\end{remark}

\paragraph{Scalable stochastic approximation of DCR.} 

To address the computational constraints of DCR in massive test set scenarios ($m \gg n$), we propose Monte-Carlo Distribution-informed Conformal Ranking (\textbf{MDCR}), a stochastic approximation that estimates the mixture distribution via efficient sampling. Algorithmically, while DCR derives the threshold by calculating the exact expectation of score distributions across the calibration set, MDCR adopts a simulation-based approach. The detailed procedure is outlined in Algorithm~\ref{alg:mDCR}.

\begin{algorithm}[h]
   \caption{Monte-Carlo Distribution-informed Conformal Ranking (MDCR)}
   \label{alg:mDCR}
\begin{algorithmic}[1]
   \STATE {\bfseries Input:} Calibration data $\{(X_i, R_i^c)\}_{i=1}^n$, Test data $\{X_{n+j}\}_{j=1}^m$, Ranking Model $A$, Error rate $\alpha$.
   \STATE \textbf{Step 1: Sample Calibration Scores}
   \FOR{$i=1$ {\bfseries to} $n$}
       \STATE Get predicted rank $\hat{R}_i^{c+t} \leftarrow A(X_i)$.
       \STATE Sample rank $\tilde{R}_i^{c+t} \sim \text{NegHyperGeom}(N, m, R_i^c)$. \COMMENT{Prop.~\ref{prop:nh}}
       \STATE Compute score $\tilde{S}_i \leftarrow s(X_i,\tilde{R}_i^{c+t})$
   \ENDFOR
   \STATE \textbf{Step 2: Compute Threshold via Sorting}
   \STATE Sort the sampled scores: $\tilde{S}_{(1)} \le \tilde{S}_{(2)} \le \dots \le \tilde{S}_{(n)}$.
   \STATE Find index $k \leftarrow \lceil(n+1)(1-\alpha)\rceil$.
   \STATE Set threshold $s^* \leftarrow \tilde{S}_{(k)}$.
   \STATE \textbf{Step 3: Construct Prediction Sets}
   \FOR{$j=1$ {\bfseries to} $m$}
       \STATE Get predicted rank $\hat{R}_{n+j}^{c+t} \leftarrow A(X_{n+j})$.
       \STATE $\hat{\mathcal{C}}_{n+j} \leftarrow \{ r \in [N] : s(X_{n+j},r) \le s^* \}$.
   \ENDFOR
   \STATE {\bfseries Return} $\{\hat{\mathcal{C}}_{n+j}\}_{j=1}^m$
\end{algorithmic}
\end{algorithm}

Since the Negative Hypergeometric distribution (Prop.~\ref{prop:nh}) characterizes the rank uncertainty of calibration items, each random draw in MDCR effectively simulates a potential realization of the ground-truth full ranking. By constructing the threshold based on these simulated realizations, MDCR directly leverages standard conformal prediction principles to ensure valid coverage. This approach shifts the computational bottleneck from dense probability summation to a simple sorting operation, achieving $\mathcal{O}(n \log n)$ complexity. In Table~\ref{tab:complexity}, we summarize the algorithmic properties of DCR, MDCR, and TCPR.

\begin{table}[h]
    \caption{Comparison of computational cost, prediction set tightness, and variance properties of different methods. $n$ is the calibration set size, $m$ is the test set size, and $K$ is the number of simulations for TCPR (typically $K=10^5$ \citet{fermanian2025transductive}).}
    \label{tab:complexity}
    \begin{center}
    \begin{small}
    \begin{sc}
    \begin{tabular}{l c c c}
    \toprule
    Method & Complexity & Tightness & Variance \\ 
    \midrule
    TCPR & $\mathcal{O}(K \cdot n \log n)$ & Low & Low \\
    DCR (Ours) & $\mathcal{O}(nm)$ & High & Low \\
    MDCR (Ours) & $\mathcal{O}(n \log n)$ & High & High \\
    \bottomrule
    \end{tabular}
    \end{sc}
    \end{small}
    \end{center}
\end{table}

While MDCR preserves the theoretical validity guarantees of DCR, the stochastic nature of its Monte-Carlo sampling introduces additional variability into the threshold estimation process. Consequently, MDCR exhibits higher variance in the resulting prediction sets compared to the deterministic DCR method, specifically in the asymptotic sense. We formally establish the marginal validity of MDCR and the asymptotic variance inequality between the two methods in the following theorems, with the proofs provided in Appendix \ref{app:proof_validity} and Appendix \ref{app:proof_variance}, respectively.

\begin{theorem}[Marginal Validity of MDCR]
\label{thm:validity_mDCR}
For any target error rate $\alpha \in (0, 1)$, the prediction set $\hat{\mathcal{C}}_{n+j}$ constructed by MDCR (Algorithm~\ref{alg:mDCR}) satisfies:
\begin{equation}
    \mathbb{P}(R_{n+j}^{c+t} \in \hat{\mathcal{C}}_{n+j}) \ge 1 - \alpha,
\end{equation}
where the probability is taken over the randomness of both the calibration and test data.
\end{theorem}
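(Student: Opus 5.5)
The plan is to reduce the statement to the standard split-conformal argument. The key fact is that each simulated rank $\tilde R_i^{c+t}$ has the same conditional law as the true rank $R_i^{c+t}$ given the observed relative rank. Write $S_i=s(X_i,R_i^{c+t})$ for the latent true calibration scores and $S_{n+j}=s(X_{n+j},R_{n+j}^{c+t})$ for the test score. Since $R_{n+j}^{c+t}\in\hat{\mathcal C}_{n+j}$ iff $S_{n+j}\le s^*=\tilde S_{(k)}$ with $k=\lceil (n+1)(1-\alpha)\rceil$, it suffices to show
\[
\mathbb{P}\bigl(S_{n+j}\le \tilde S_{(k)}\bigr)\ge 1-\alpha .
\]

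\textbf{Step 1 (oracle statement).} By Assumption~\ref{ass:exchange}, the vector $(S_1,\dots,S_n,S_{n+j})$ is exchangeable. Every score is a fixed function of its own $X$ and of the permutation-invariant bag of all $(X_l,Y_l)$, so permuting items permutes the scores. The usual quantile lemma then gives $\mathbb{P}(S_{n+j}\le S_{(k)})\ge k/(n+1)\ge 1-\alpha$, where $S_{(k)}$ is the $k$-th smallest true calibration score.

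\textbf{Step 2 (replace $S_{(k)}$ by $\tilde S_{(k)}$).} I would condition on $\mathcal{G}=\sigma\bigl(\{X_l\}_{l\in[N]},\{R_i^c\}_{i\in[n]}\bigr)$. Proposition~\ref{prop:nh} says $R_i^{c+t}\mid R_i^c$ is a shifted Negative Hypergeometric, and this is precisely the sampling law used in Algorithm~\ref{alg:mDCR}. Hence $\mathbb{P}(\tilde S_i\le t\mid\mathcal G)=F_i(t)=\mathbb{P}(S_i\le t\mid R_i^c)$ for each $i$. Two points need care here.
\begin{itemize}
\item The conditional law of $R_i^{c+t}$ given the full $\mathcal G$, not just $R_i^c$, must still be Negative Hypergeometric. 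I would obtain this by using exchangeability to show that the interleaving of test positions among the calibration order is uniform and independent of the features.
\item I would then couple the simulated ranks with the true ones. The natural device is to draw a single uniformly random interleaving of $m$ test slots among the $n$ ordered calibration items. This makes $(\tilde S_1,\dots,\tilde S_n)$ equal in joint law to $(S_1,\dots,S_n)$ given $\mathcal G$. The coupled vector is then exchangeable with $S_{n+j}$, and Step 1 transfers verbatim.
\end{itemize}

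\textbf{Step 3 (independent sampling).} Algorithm~\ref{alg:mDCR} draws the $\tilde R_i^{c+t}$ independently across $i$, which is a product of the correct marginals rather than the joint law. This is the step I expect to be the main obstacle. Equal marginals alone do not fix the law of the $k$-th order statistic, so one must additionally show that it is not stochastically smaller under independence.

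I would first try to compare the two laws of $\tilde S_{(k)}$ through $\mathbb{P}(\tilde S_{(k)}\le t\mid\mathcal G)=\mathbb{P}\bigl(\sum_i \mathbb{I}\{\tilde S_i\le t\}\ge k\bigr)$. Under independence, this count is Poisson-binomial with mean $nF_{\text{mix}}(t)$, exactly as under the joint interleaving. The remaining task is a convex-order comparison of the two counts. The ranks under the interleaving are positively associated, being monotone in a common uniform interleaving. If that comparison fails in general, I would fall back on one of two options:
\begin{itemize}
\item restate the sampler as a joint interleaving draw, which keeps the $\mathcal O(n\log n)$ cost and makes Step 2 exact;
\item bound the discrepancy using the concentration already exploited in Theorem~\ref{thm:fcp_bound}.
\end{itemize}
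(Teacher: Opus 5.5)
\textbf{There is a genuine gap. It sits before the obstacle you flagged: the first bullet of Step~2 is false.} Step~1 is fine. But exchangeability only makes the calibration/test labelling uniform given the \emph{unlabelled} bag of data. Your $\mathcal G$ reveals which features belong to the test items, and those features are informative about where the unseen test responses fall relative to the calibration order. This is exactly what a ranking model exploits, and $s(X_i,\cdot)$ depends on the same features through $\hat R_i^{c+t}$. The Negative Hypergeometric law of Proposition~\ref{prop:nh} is the law of $R_i^t$ given $R_i^c$ alone, not given $\mathcal G$; if $Y_l=X_l$, $\mathcal G$ even determines every absolute rank. Hence $\mathbb{P}(S_i\le t\mid\mathcal G)\neq F_i(t)$ in general. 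No coupling makes $(\tilde S_1,\dots,\tilde S_n)$ match $(S_1,\dots,S_n)$ conditionally on $\mathcal G$, so Step~1 does not transfer.

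\textbf{The statement itself fails without further assumptions, so this cannot be repaired.} Take $n=1$, $m=2$ and $\alpha=1/2$, so $k=1$ and $s^*=\tilde S_1$. Let $Y_l=X_l$ be i.i.d.\ uniform, and use the RA model $A(X_i,\Lbag X_l\Rbag)=4-\mathrm{R}(X_i,\Lbag X_l\Rbag)$, i.e.\ $\hat R_i^{c+t}=4-R_i^{c+t}$. Then $R_1^c=1$ and $\text{NegHypergeometric}(3,2,1)$ is uniform on $\{0,1,2\}$. So $U:=\tilde R_1^{c+t}$ is uniform on $\{1,2,3\}$ and independent of the data, and $\tilde S_1=|U+R_1^{c+t}-4|$. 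The test score is $|2R_{n+j}^{c+t}-4|$, which equals $2,0,2$ at positions $1,2,3$.
\begin{itemize}
\item A test item at position $2$ is always covered.
\item A test item at position $1$ or $3$ is never covered when the calibration item is at position $2$, since then $\tilde S_1\le 1$.
\item It is covered with probability $1/3$ when the calibration item is at the opposite end.
\end{itemize}
Therefore
\[
\mathbb{P}\bigl(R_{n+j}^{c+t}\in\hat{\mathcal C}_{n+j}\bigr)=\tfrac13+\tfrac23\cdot\tfrac12\cdot\tfrac13=\tfrac49<\tfrac12=1-\alpha .
\]
Because $n=1$, independent and joint-interleaving sampling coincide. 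So neither the convex-order comparison of Step~3 nor your joint-sampler fallback can help, and the concentration fallback could give at best approximate validity.

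\textbf{The paper's proof does not close the gap either.} It asserts $(Z_i,\tilde R_i^t)\overset{d}{=}(Z_{n+j},R_{n+j}^t)$, which fails for the reason above. It then infers exchangeability of $(\tilde S_1,\dots,\tilde S_n,S_{n+j})$ from conditional independence of the draws, which is exactly the inference your Step~3 rightly distrusts. The same example gives coverage $1/3$ for DCR. A correct theorem needs either an extra hypothesis under which the NH law holds conditionally on everything entering the score, or a different calibration rule.
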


\begin{theorem}[Asymptotic Variance Comparison]
\label{thm:variance}
Let $s_{\text{DCR}}^*$ and $s_{\text{MDCR}}^*$ be the threshold estimators produced by DCR and MDCR, respectively.
Then, as the calibration size $n \to \infty$, the asymptotic variance of DCR is strictly lower than that of MDCR:
\begin{equation}
\label{eq:var_ineq}
\text{AsyVar}(s_{\text{DCR}}^*) < \text{AsyVar}(s_{\text{MDCR}}^*).
\end{equation}
\end{theorem}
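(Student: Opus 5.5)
Both thresholds are quantiles of objects built from the same conditional score distributions $F_i$, so I will compare them through an asymptotic (delta-method / Bahadur-type) expansion of a quantile functional. Write $q:=\lceil (n+1)(1-\alpha)\rceil/(n+1)\to 1-\alpha$.

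Since $R_i^{c}$ determines $F_i$ (together with the model output for $X_i$), each $F_i$ is a random function of the observed calibration data; write $F_i = G(\cdot\mid Z_i)$ with $Z_i=(X_i,R_i^c)$. The DCR threshold is $s^*_{\mathrm{DCR}}=F_{\mathrm{mix}}^{-1}(q)$, where $F_{\mathrm{mix}}=\frac1n\sum_i F_i$. The MDCR threshold is the $q$-th empirical quantile of $\tilde S_i$, where $\tilde S_i\mid Z_i\sim F_i$ independently. Its empirical CDF is $\tilde F_n(t)=\frac1n\sum_i \mathbb{I}\{\tilde S_i\le t\}$, and it satisfies $\mathbb{E}[\tilde F_n(t)\mid Z]=F_{\mathrm{mix}}(t)$.

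The key decomposition is
\[
\tilde F_n(t)-\bar F(t)=\bigl(F_{\mathrm{mix}}(t)-\bar F(t)\bigr)+\bigl(\tilde F_n(t)-F_{\mathrm{mix}}(t)\bigr),
\]
where $\bar F$ is the population limit of $F_{\mathrm{mix}}$ and $t^*=\bar F^{-1}(1-\alpha)$.

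I will assume the regularity that the paper's asymptotic framework implicitly needs:
\begin{itemize}
\item a limit $\bar F$ exists with positive density $f$ at $t^*$;
\item $\sqrt n(F_{\mathrm{mix}}-\bar F)$ obeys a CLT at $t^*$ with variance $\sigma_1^2$.
\end{itemize}
Some care is needed for the second point. The $F_i$ are not i.i.d., because the $R_i^c$ form a permutation. I would argue via exchangeability that $\frac1n\sum_i F_i(t^*)$ is a smooth function of the empirical distribution of $(X_i,R_i^c/n)$ and invoke a CLT for rank statistics (Hájek). For a rigorous write-up I would state this as an assumption if needed.

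Bahadur representations then give
\[
s^*_{\mathrm{DCR}}-t^*\approx -\frac{F_{\mathrm{mix}}(t^*)-\bar F(t^*)}{f(t^*)},\qquad
s^*_{\mathrm{MDCR}}-t^*\approx -\frac{\tilde F_n(t^*)-\bar F(t^*)}{f(t^*)}.
\]
For the second representation, $\tilde F_n$ is an empirical CDF of conditionally independent draws, so the standard argument applies conditionally on $Z$.

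The second term in the decomposition is a sum of conditionally independent, conditionally mean-zero variables. It is therefore uncorrelated with any function of $Z$, in particular with the first term. By the law of total variance,
\[
n\operatorname{Var}(\tilde F_n(t^*))\to \sigma_1^2+\sigma_2^2,\qquad \sigma_2^2=\lim_n \frac1n\sum_i \mathbb{E}\bigl[F_i(t^*)(1-F_i(t^*))\bigr].
\]
This yields $\mathrm{AsyVar}(s^*_{\mathrm{DCR}})=\sigma_1^2/f^2$ and $\mathrm{AsyVar}(s^*_{\mathrm{MDCR}})=(\sigma_1^2+\sigma_2^2)/f^2$. A conditional Lindeberg CLT for the second term makes the joint limit Gaussian.

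Strictness requires $\sigma_2^2>0$, i.e. $F_i(t^*)\in(0,1)$ for a non-vanishing fraction of items. This holds whenever $m\ge1$: the NH distribution then puts mass on at least two ranks, and $t^*$ lies in the interior of the score support. I would make this a nondegeneracy condition; otherwise $\sigma_2^2$ could be $0$ and the inequality would fail.

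The main obstacle is justifying the Bahadur/CLT step for $F_{\mathrm{mix}}$, given the dependence among the $R_i^c$ and the discreteness of the scores (especially in the RA setting). There $f$ must be replaced by a smoothed or continuous-limit density, or the result stated for the VA setting. I would handle this by assuming a continuous limiting score distribution in a $n,m\to\infty$ regime with $m/n$ fixed.
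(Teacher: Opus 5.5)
Your decomposition is the right way to make the paper's argument precise. The paper applies the law of total variance directly to the thresholds: DCR has zero conditional variance, and MDCR picks up an extra $\E[\sigma^2_{\mathrm{MC}}]$. It then asserts $\Var(\E[s^*_{\mathrm{MDCR}}\mid\mathcal{D}_{\mathrm{cal}}])\to\Var(s^*_{\mathrm{DCR}})$, which is vacuous because both sides tend to $0$. Your split of $\tilde F_n(t^*)$ into $F_{\mathrm{mix}}(t^*)$ plus conditionally centred noise, followed by Bahadur expansions at the $\sqrt n$ scale, is what actually gives ``AsyVar'' a meaning. (Minor point: $F_i$ is not a function of $Z_i=(X_i,R_i^c)$ alone, since $A(X_i)=A(X_i,\Lbag X_j\Rbag_{j\in[N]})$ and, for VA, $\mathrm{R}^{-1}$ involve all covariates. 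Condition on the full observed data instead; your orthogonality argument is unchanged.)

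\textbf{The gap is the strictness step.} $\sigma_2^2>0$ is not equivalent to ``$F_i(t^*)\in(0,1)$ for a non-vanishing fraction of items''. You need $\liminf_n\frac1n\sum_i\E[F_i(t^*)(1-F_i(t^*))]>0$, and in the regime you propose (fixed model, $m=\lambda n$, continuous limit of the normalised scores) this fails.

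For $R_i^c=r=\theta n$, the NH variance is $\frac{rm(N+1)(n+1-r)}{(n+1)^2(n+2)}\approx\theta(1-\theta)\lambda(1+\lambda)n$. So $R_i^{c+t}$ fluctuates by only $O(\sqrt n)$ around its conditional mean, while the RA scores and $t^*$ are of order $N$. In VA, the induced fluctuation of $\mathrm{R}^{-1}(R_i^{c+t})$ is $O(n^{-1/2})$ against $O(1)$ scores. Each $F_i$ is therefore a step at its conditional centre, smoothed over a relative window of width $O(n^{-1/2})$. By the very density assumption you place on $\bar F$ at $t^*$, only an $O(n^{-1/2})$ fraction of items have $F_i(t^*)$ bounded away from $0$ and $1$; NH tails are sub-Gaussian at scale $\sqrt n$, so the remaining items contribute negligibly. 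Hence $\sigma_2^2=0$, even though $F_i(t^*)\in(0,1)$ for a constant fraction of items.

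Your own expansion then yields $\mathrm{AsyVar}(s^*_{\mathrm{DCR}})=\mathrm{AsyVar}(s^*_{\mathrm{MDCR}})$. The Monte Carlo part of $\Var(\tilde F_n(t^*))$ is $O(n^{-3/2})$, of lower order than the $O(n^{-1})$ data part. This matches the paper's appendix experiment, where the two variance curves are nearly indistinguishable. The paper's own positivity claim (``$F_i$ non-degenerate, hence $\sigma^2_{\mathrm{MC}}>0$'') has the same defect, so it cannot fill the hole.

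To get a strict inequality you would need one of two things. One option is a regime where the model's rank errors are of the same order $\sqrt n$ as the NH spread, with the CLT for $F_{\mathrm{mix}}$ re-derived there. The other is a genuinely second-order statement at order $n^{-3/2}$. That requires controlling $\E[s^*_{\mathrm{MDCR}}\mid\mathcal{D}]-s^*_{\mathrm{DCR}}$ far more finely than first-order Bahadur remainders allow.
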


\section{Experiments}
\label{sec:experiments}

In this section, we evaluate DCR and MDCR on multiple benchmark datasets.
We first describe the experimental setup, including the models, datasets, uncertainty scores, and baselines.
The source code comparing DCR, MDCR, Oracle, and TCPR on synthetic data is available at \url{https://anonymous.4open.science/r/ICML2026-0670/}.
We then present the main results, showing that DCR satisfies the target coverage constraint while achieving superior efficiency compared with baselines.
In addition, we conduct comprehensive analyses to show that larger calibration sets and smaller test sets enable DCR to more effectively exploit distributional information, and that VA score is more adaptive than RA score.

\subsection{Experimental setup}


\paragraph{Datasets and models.}
We conduct experiments on four ranking datasets, including Yummly28K \citep{wilber2015learning}, ESOL \citep{delaney2004esol}, anime recommendation \citep{ransaka2018anime}, and a synthetic dataset \citep{gazin2024transductive}.
For each dataset, we employ three widely-used ranking models: RankNet \citep{burges2005learning}, LambdaMart \citep{liu2009learning}, and RankSVM \citep{joachims2002optimizing}.
The details of datasets and models are provided in Appendix~\ref{app:details}.


\paragraph{Baselines and evaluation metrics.}
We compare DCR and MDCR against two baselines:
(1) Oracle, which computes non-conformity scores using the unobserved true ranks of the calibration items.
This serves as the theoretical lower bound for prediction set sizes.
(2) TCPR \citep{fermanian2025transductive}.
For TCPR, we use the quantile envelope to construct rank bounds, as it provides the tightest bounds in practice.
We evaluate the ranking performance by measuring the following metrics:
(1) prediction set size;
(2) false coverage proportion (FCP), defined as the fraction of test items whose true ranks are not covered by their prediction sets;
and (3) relative length: the average prediction set size normalized by the total number of items ($N=n+m$).



\paragraph{Implementation details.}
Evaluating RA and VA score functions, we set $n=100$ and $m=500$ for verification experiments, while real-world benchmarks utilize a 4:3:3 split for training, calibration, and test sets. We employ full datasets for ESOL and Yummly28k alongside a 30,000-sample subset for Anime. Ground-truth scores for Yummly28k are defined as standardized negative $L_1$ distances to a latent preference point $x^*$ with additive Gaussian noise. For the Anime dataset, relevance scores are jittered with infinitesimal noise $\mathcal{U}(-10^{-6}, 10^{-6})$ to break ties. Unless otherwise specified, we report the mean FCP and relative length averaged over 100 independent trials.



\subsection{Main Results}
\label{subsec:main_results}


\begin{table*}[h]
\centering
\caption{
Ranking performance of DCR compared to baselines across four datasets at a target error rate $\alpha = 0.1$. We evaluate DCR against Oracle CP and the TCPR baseline. Bold values indicate a superior (smaller) relative length for DCR relative to the TCPR method.}
\label{tab:main_results}
\label{tab:main_results}
\setlength{\tabcolsep}{6pt}
\small
\begin{tabular}{ll ccc @{\hskip 15pt} ccc}
\toprule
 & & \multicolumn{6}{c}{\textbf{Metric: Coverage (\%) $\mid$ Relative Length (\%)}} \\
\cmidrule(lr){3-8}
 & & \multicolumn{3}{c}{\textbf{RA Scores}} & \multicolumn{3}{c}{\textbf{VA Scores}} \\
\cmidrule(lr){3-5} \cmidrule(lr){6-8}
\textbf{Dataset} & \textbf{Model} & \textbf{Oracle} & \textbf{TCPR} & \textbf{DCR} & \textbf{Oracle} & \textbf{TCPR} & \textbf{DCR} \\
\midrule
\multirow{3}{*}{Synthetic} & LambdaMART & 90.29$\mid$34.94 & 98.70$\mid$52.52 & 92.37$\mid$\textbf{37.56} & 90.14$\mid$34.52 & 99.40$\mid$54.25 & 92.91$\mid$\textbf{37.65} \\
 & RankNet & 90.65$\mid$22.06 & 99.88$\mid$41.41 & 95.11$\mid$\textbf{26.70} & 90.58$\mid$21.81 & 99.99$\mid$43.84 & 96.01$\mid$\textbf{27.43} \\
 & RankSVM & 90.10$\mid$25.25 & 99.54$\mid$43.47 & 94.54$\mid$\textbf{29.08} & 89.90$\mid$29.86 & 98.87$\mid$52.45 & 93.09$\mid$\textbf{33.86} \\
\midrule
\multirow{3}{*}{Yummly} & LambdaMART & 89.95$\mid$53.33 & 92.50$\mid$57.27 & 89.97$\mid$\textbf{53.36} & 89.89$\mid$53.90 & 93.14$\mid$58.92 & 89.99$\mid$\textbf{54.02} \\
 & RankNet & 90.04$\mid$53.28 & 92.65$\mid$57.35 & 90.03$\mid$\textbf{53.27} & 90.02$\mid$51.47 & 93.07$\mid$56.47 & 90.10$\mid$\textbf{51.58} \\
 & RankSVM & 90.14$\mid$54.31 & 92.57$\mid$58.33 & 90.15$\mid$\textbf{54.33} & 89.98$\mid$54.40 & 93.22$\mid$59.24 & 90.03$\mid$\textbf{54.47} \\
\midrule
\multirow{3}{*}{ESOL} & LambdaMART & 90.18$\mid$55.08 & 94.19$\mid$63.15 & 90.24$\mid$\textbf{55.13} & 90.22$\mid$60.54 & 95.14$\mid$68.19 & 90.07$\mid$\textbf{60.36} \\
 & RankNet & 90.56$\mid$62.63 & 93.83$\mid$69.76 & 90.47$\mid$\textbf{62.41} & 90.33$\mid$65.95 & 94.50$\mid$75.44 & 90.24$\mid$\textbf{65.65} \\
 & RankSVM & 90.36$\mid$57.29 & 94.11$\mid$65.13 & 90.40$\mid$\textbf{57.31} & 89.99$\mid$65.31 & 94.86$\mid$75.51 & 90.35$\mid$\textbf{65.77} \\
\midrule
\multirow{3}{*}{Anime} & LambdaMART & 90.04$\mid$81.23 & 92.38$\mid$84.46 & 90.04$\mid$\textbf{81.23} & 90.00$\mid$81.08 & 92.33$\mid$84.55 & 90.02$\mid$\textbf{81.11} \\
 & RankNet & 89.99$\mid$80.92 & 92.32$\mid$84.23 & 89.99$\mid$\textbf{80.92} & 89.95$\mid$81.42 & 92.32$\mid$85.15 & 89.96$\mid$\textbf{81.43} \\
 & RankSVM & 90.03$\mid$81.15 & 92.30$\mid$84.32 & 90.02$\mid$\textbf{81.13} & 89.98$\mid$81.62 & 92.37$\mid$85.18 & 89.98$\mid$\textbf{81.63} \\
\bottomrule
\end{tabular}
\end{table*}

\paragraph{DCR guarantees the coverage while producing tight prediction sets.} 
Table \ref{tab:main_results} reports the ranking performance of DCR and baseline methods across four datasets. In all evaluated settings, DCR successfully controls the coverage at the target level $1-\alpha=0.9$ while producing prediction sets with significantly smaller relative lengths than those of the TCPR baseline. For instance, on the synthetic dataset using RankNet with the RA score, DCR achieves an empirical coverage of 95.11\% and a relative length of 26.70\%, whereas TCPR yields a substantially larger relative length of 41.41\%. This corresponds to a reduction in set size of approximately 36\% compared to TCPR. Furthermore, DCR frequently achieves performance comparable to the Oracle CP. Specifically, on the Yummly dataset using RankNet with the VA score, DCR achieves an empirical coverage of 90.10\% and a relative length of 51.58\%, which is nearly identical to the 51.47\% length attained by the Oracle CP. This superior efficiency is consistently observed across diverse datasets and models. Overall, the proposed methodology satisfies the target coverage constraint while demonstrating greater efficiency than existing baselines.

\begin{figure}[t]
    \centering
    \includegraphics[width=\linewidth]{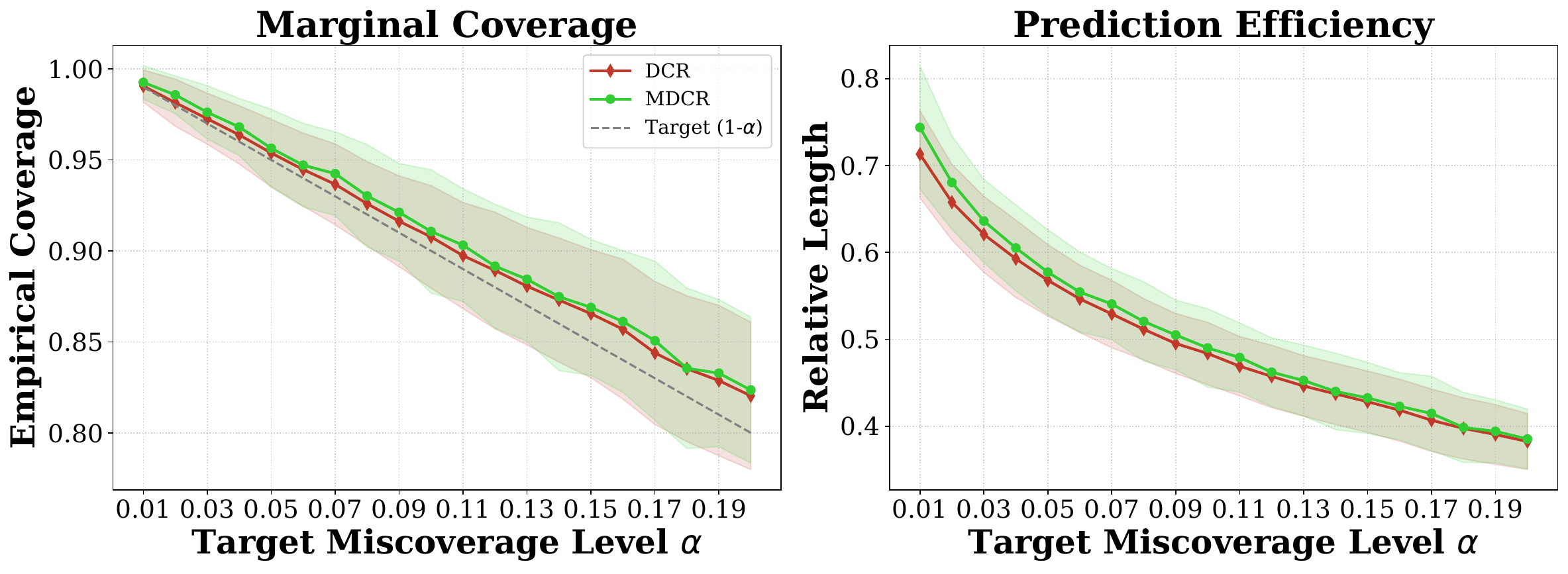} 
    \caption{Empirical coverage and prediction efficiency of RankNet under different target miscoverage levels $\alpha$ on the Synthetic dataset. The results are averaged over 1,000 independent trials with $n=100$ and $m=500$. The shaded areas represent standard deviation.}
    \label{fig:coverage_efficiency}
\end{figure}

\begin{figure}[t]
    \centering
    \includegraphics[width=\linewidth]{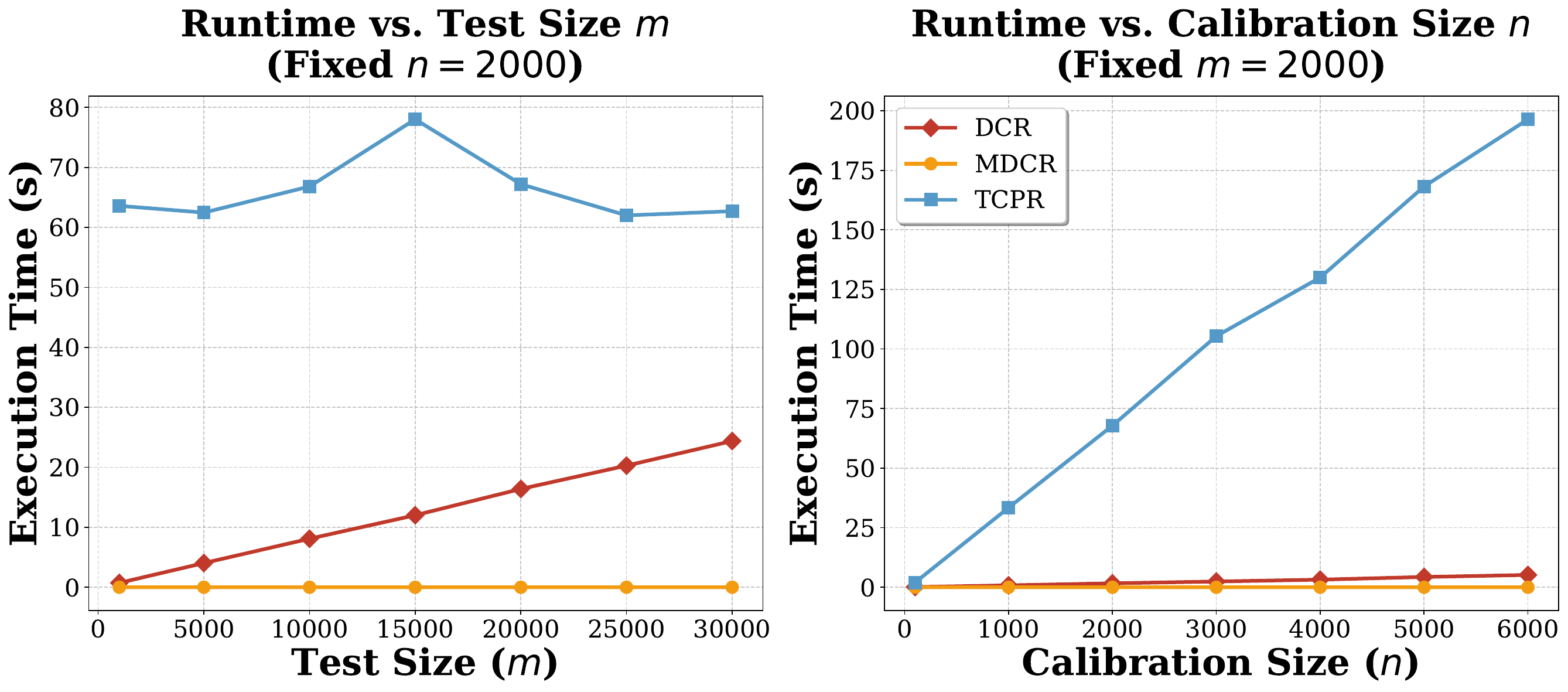} 
    \vspace{-15pt}
    \caption{Runtime comparison of DCR, MDCR, and TCPR on Yummly28k using LambdaMART. MDCR is the most computationally efficient.}
    \label{fig:runtime_analysis}
    \vspace{-10pt}
\end{figure}

\paragraph{DCR achieves higher statistical stability whereas MDCR offers greater computational efficiency.}
Figure~\ref{fig:coverage_efficiency} and Figure~\ref{fig:runtime_analysis} present the performance and runtime results on the Synthetic and Yummly28k datasets. As observed in Figure~\ref{fig:coverage_efficiency}, DCR exhibits slightly lower variance in both realized coverage and prediction set sizes compared to MDCR, as indicated by the narrower shaded regions. However, our extended sensitivity analysis in Appendix~\ref{add:variance} demonstrates that this discrepancy diminishes as the dataset size grows. This implies that for practical applications with sufficient data, MDCR preserves reliability while offering superior scalability. Regarding computation, since DCR complexity scales as $O(nm)$, its runtime will eventually exceed that of TCPR for very large $m$. In contrast, MDCR maintains a highly efficient inference time of $O(n \log n)$, illustrating a favorable trade-off between computational speed and minor statistical fluctuations. Appendix~\ref{add:alpha} provides additional comparisons between DCR and TCPR.



\subsection{Additional analysis}

\begin{figure}[t]
    \centering
    \includegraphics[width=\linewidth]{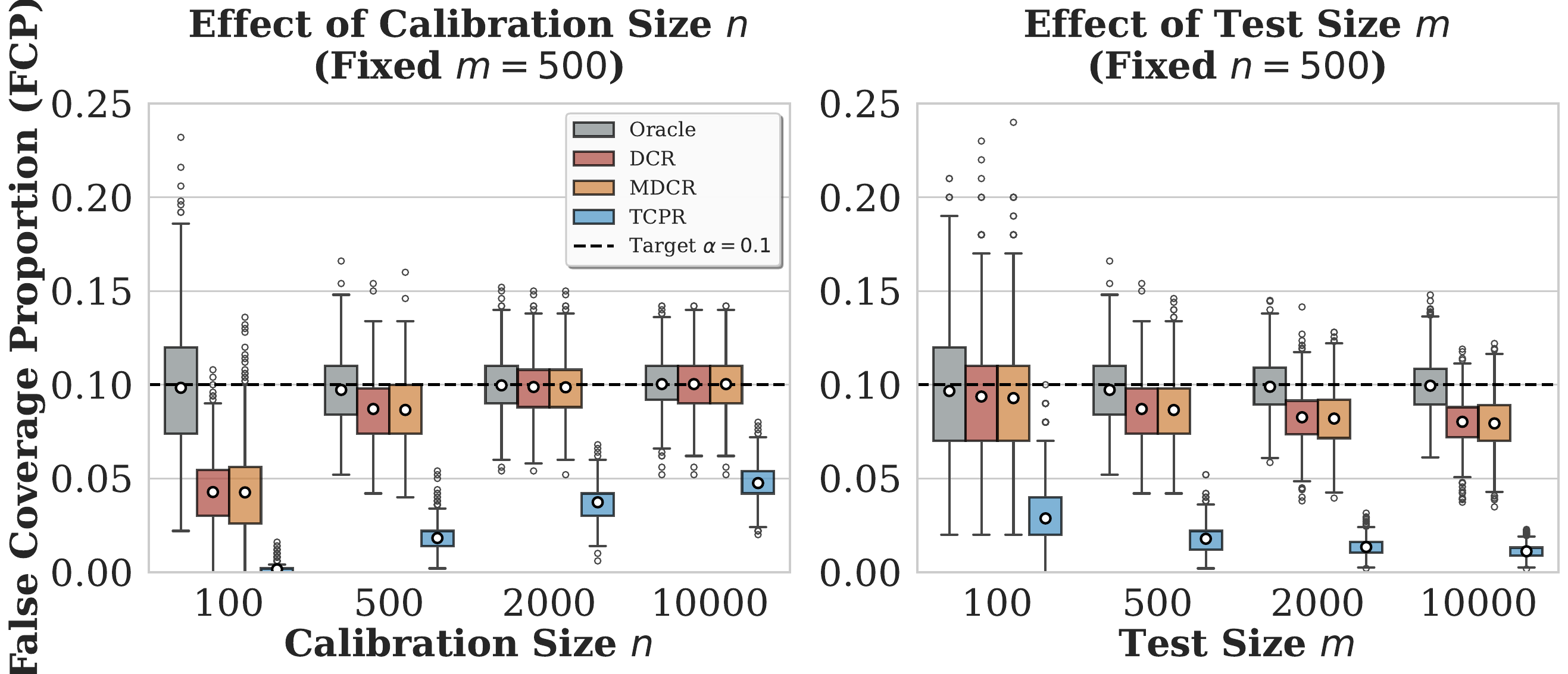}
    \caption{Finite-sample FCP convergence of RankNet on the Synthetic dataset with respect to sample sizes. Results are averaged over 1,000 trials.}
    \label{fig:ablation_fcp}
\end{figure}

\paragraph{Larger calibration set size or smaller test set size yields tighter coverage for DCR.}
Figure~\ref{fig:ablation_fcp} shows the impact of calibration size $n$ and test size $m$ on coverage and relative length. Both DCR and TCPR are conservative when $n$ is small or $m$ is large because the Negative Hypergeometric distribution has high variance under rank uncertainty. In these regimes, the conservatism of DCR is significantly lower than that of TCPR. As $n$ increases or $m$ decreases, DCR and MDCR use distributional information to approach the target $\alpha$, while TCPR remains conservative due to its loose envelope-based bounds. The reduction in FCP dispersion with larger sample sizes validates the concentration guarantees in Theorem~\ref{thm:fcp_bound} and Remark~\ref{rem:asyptotic}. Beyond sample sizes, the precision of the ranking model also influences the degree of conservatism. Appendix~\ref{app:additionalexp} contains an analysis of how DCR adjusts to base model precision $\sigma$ and sheds conservatism as data label noise increases.

\begin{figure}[t]
    \centering
    \resizebox{\linewidth}{!}{
    \begin{subfigure}[t]{0.49\textwidth}
        \centering
        \includegraphics[width=\textwidth]{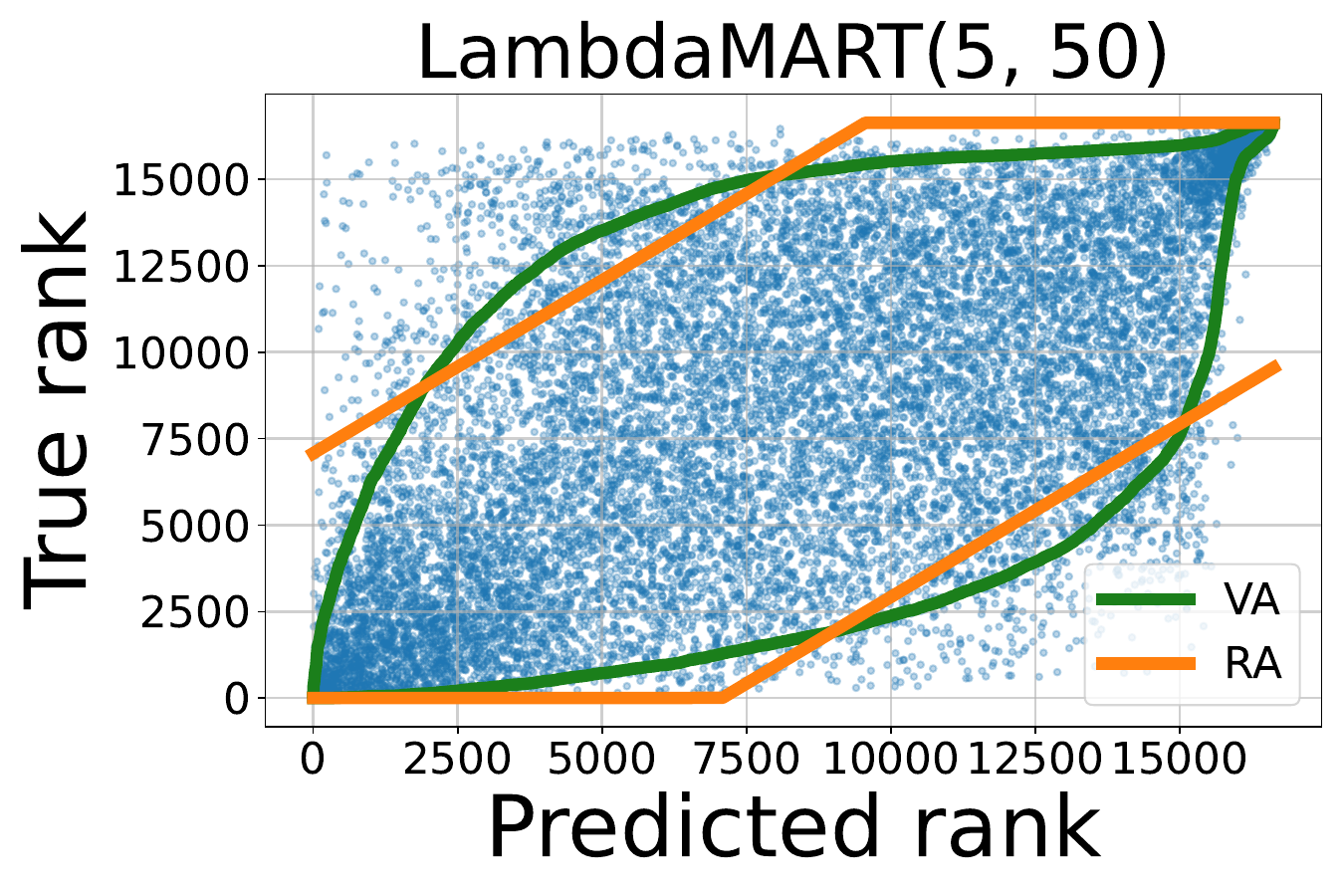}
        \label{fig:anime_weak}
    \end{subfigure}
    \hfill
    \begin{subfigure}[t]{0.49\textwidth}
        \centering
        \includegraphics[width=\textwidth]{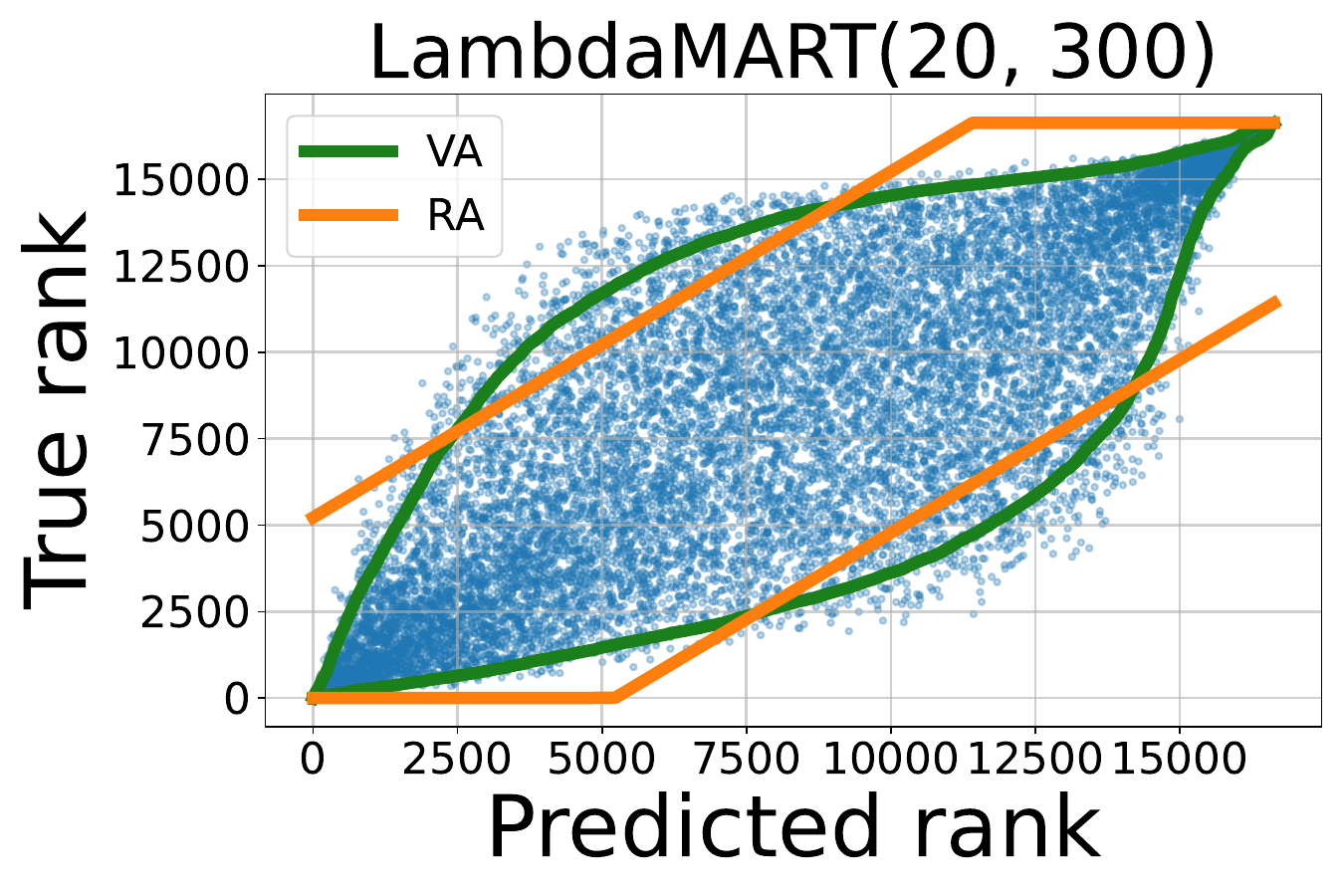}
        \label{fig:anime_strong}
    \end{subfigure}
    }
    \vspace{-10pt}
    \caption{Visualization of prediction intervals on the Anime dataset. The scatter plots display the True Rank versus the Predicted Rank for LambdaMART.
    The curves represent the prediction sets constructed by DCR-RA (orange) and DCR-VA (green) at $\alpha=0.1$.}
    \label{fig:anime_adaptivity}
    \vspace{-10pt}
\end{figure}

\paragraph{VA scores demonstrate superior local adaptivity.} 
Figure \ref{fig:anime_adaptivity} presents True vs. Predicted Rank scatter plots on the Anime dataset across diverse hyperparameter settings. We observe that regardless of these variations, DCR-VA consistently exhibits a characteristic ``spindle'' shape, producing significantly tighter sets at the confident extremes (head and tail) while widening only in the high-uncertainty middle. In contrast, DCR-RA produces fixed-width intervals across the entire ranking, reflecting a homoscedastic assumption that forces global expansion to guarantee coverage for hard-to-rank items. By leveraging value-based information, VA scores effectively capture the heteroscedastic nature of ranking noise. This enables superior efficiency through local adaptivity to model confidence, a property that remains robust across different model capacities.

\section{Related work}
\paragraph{Ranking algorithms and uncertainty quantification.}
Ranking primarily aims to infer a particular full ordering over a given set of items. Most existing research focuses on optimizing the predictive accuracy of this point estimate, utilizing state-of-the-art learning-to-rank methods \citep{liu2009learning} based on pairwise comparisons \citep{shah2018simple, newman2023efficient} or specialized algorithms designed to recover ground-truth rankings from noisy, sparse, or incomplete data \citep{jamieson2011active, fotakis2021aggregating}. While these approaches are highly effective at producing accurate orderings, they typically output a deterministic result without conveying any associated measure of predictive uncertainty. Although some recent works have explored uncertainty quantification (UQ) for full ranking, they predominantly rely on specific parametric families, such as the Bradley--Terry--Luce model \citep{chen2022optimal, karle2023dynamic}. These methods necessitate strong underlying modeling assumptions that are frequently violated in complex real-world scenarios. Consequently, there is a critical and growing need for principled, distribution-free UQ methodologies for full ranking, which motivates our adoption of conformal inference.

\paragraph{Conformal prediction.}
Conformal prediction has emerged as a foundational technique in machine learning \citep{vovk2005algorithmic, angelopoulos2021gentle}, offering rigorous finite-sample coverage guarantees without requiring specific or restrictive distributional assumptions on the data. Our research aligns with the framework of split conformal prediction \citep{papadopoulos2002inductive, lei2018distribution}, a methodology extensively applied to regression \citep{romano2019conformalized}, classification \citep{romano2020classification}, and complex structured outputs such as those from large language models \citep{su2024api}. Recently, these principles have been successfully extended to various ranking tasks, including retrieval-augmented generation \citep{li2024traq, 10.1145/3726302.3730244} and recommendation systems \citep{angelopoulos2023recommendation, liang2024structured}. However, as these specific approaches are generally not directly applicable to the full ranking setting, TCPR \citep{gazin2024transductive} introduced a transductive conformal prediction framework specifically designed for full ranking within a finite population. We adopt this transductive methodology and propose a novel approach that leverages exact distributional insights to achieve higher statistical efficiency and lower computational costs than TCPR.

\section{Conclusion}
\label{sec:conclusion}

In this paper, we propose Distribution-informed Conformal Ranking (DCR), a rigorous methodology that explicitly models the probabilistic structure of rank data to construct efficient prediction sets. To our knowledge, this is the first work to utilize the Negative Hypergeometric distribution to derive the exact distribution of non-conformity scores, bypassing the conservative approximations used in prior research. We theoretically demonstrate that DCR guarantees valid marginal coverage and achieves higher efficiency than existing envelope-based methods. To address scalability, we introduce MDCR, a stochastic variant that reduces computational complexity to $\mathcal{O}(n \log n)$ while preserving validity. Extensive experiments confirm that our methods outperform the state-of-the-art baseline, reducing prediction set sizes by up to 36\% across synthetic and real-world datasets. Overall, DCR enables precise uncertainty quantification for full ranking, and our distributional insights pave the way for future research in complex settings such as covariate shift and uncertainty quantification for top-k rankings.

\section{Impact statement}
This paper presents work whose goal is to advance the field of Machine Learning.
There are many potential societal consequences of our work, none of which we feel must be specifically highlighted here.

\bibliography{example_paper}
\bibliographystyle{icml2026}

\newpage
\appendix
\onecolumn

\section{Conformal prediction}
\label{sec: cp_intro}

Conformal prediction \citep{vovk2005algorithmic, papadopoulos2008inductive} aims to produce prediction sets that contain ground truth labels with a desired coverage rate.
Formally, given a calibration set $\{(X_{i}, Y_{i})\}_{i=1}^n$ in $\mathcal{X} \times \mathcal{Y}$, a predictor $f$ and a user-specified miscoverage rate $\alpha \in (0,1)$, CP aims to construct a prediction set for test data $X_{n+1}$ that satisfy:
\begin{equation}
\label{eq:validity}
    \mathbb{P}(Y_{n+1}\in \mathcal{C}(X_{n+1}))\geq 1-\alpha,  
\end{equation}
First, we compute non-conformity scores $s_i := S(X_{i}, Y_{i})$ on the labeled calibration set, where $S:\mathcal{X} \times \mathcal{Y} \rightarrow \R$ is a score function that measures the strangeness of a sample. 
Based on the scores computed on the calibration set, we search for a threshold $s^*$ such that the probability that a test score falls below $s^*$ does not exceed the user-specified error level $\alpha$.
In particular, we determine the threshold by finding the $(1-\alpha)$-th empirical quantile of the calibration scores:
\begin{equation}\label{eq:threshold}
s^* = \text{Quantile}(\frac{\lceil(n+1)(1-\alpha)\rceil}{n}, \{s_i\}_{i\in[n]})
\end{equation}
Given a test sample $X_{n+1}$, the non-conformity score is computed for each label \(y \in \mathcal{Y}\). 
The prediction set \({\mathcal{C}}(X_{n+1})\) is constructed by including all labels whose non-conformity scores are below the threshold \(s^*\):
\begin{equation} \label{eq:cp_set}
C(X_{n+1}) = \{y \in \mathcal{Y}:S(X_{n+1}, y) \leq s^*\}
\end{equation}
Under mild assumptions, the prediction set ${\mathcal{C}}(X_{n+1})$ in Equation~\eqref{eq:cp_set} has a finite-sample coverage guarantee:

\begin{theorem}\citep{vovk2005algorithmic}\label{prop:finite-sample_coverage_guarantee}
If the calibration samples $\{(X_{i}, Y_{i})\}_{i=1}^n$ and the test sample $(X_{n+1}, Y_{n+1})$ are exchangeable, the prediction set defined in Eq.~\eqref{eq:cp_set} satisfies
\begin{align*}
\mathbb{P}(Y_{n+1}\in \mathcal{C}(X_{n+1}))\geq 1-\alpha
\end{align*}
\end{theorem}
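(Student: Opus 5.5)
The plan is to prove the classical split-conformal guarantee in the appendix statement above (Theorem~\ref{prop:finite-sample_coverage_guarantee}) with the standard rank argument. The structure is: exchangeability of the scores gives a uniform rank for the test score, and that uniformity converts the quantile definition of $s^*$ into the coverage bound.

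\textbf{Step 1 (exchangeable scores).} Define $s_i := S(X_i,Y_i)$ for $i\in[n+1]$, including the test score $s_{n+1}=S(X_{n+1},Y_{n+1})$. Because $S$ is a fixed function, determined independently of these $n+1$ points, the vector $(s_1,\dots,s_{n+1})$ inherits exchangeability from $\{(X_i,Y_i)\}_{i=1}^{n+1}$.

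\textbf{Step 2 (reduce coverage to a rank event).} Let $k:=\lceil (n+1)(1-\alpha)\rceil$. If $k>n$, the empirical quantile in \eqref{eq:threshold} is $+\infty$ by the usual convention, so $\mathcal{C}(X_{n+1})=\mathcal{Y}$ and the claim is trivial. Otherwise $s^*=s_{(k)}$, the $k$-th smallest of $s_1,\dots,s_n$. By \eqref{eq:cp_set}, $Y_{n+1}\in\mathcal{C}(X_{n+1})$ if and only if $s_{n+1}\le s_{(k)}$. I will check that this holds if and only if $s_{n+1}$ is among the $k$ smallest of the augmented collection $s_1,\dots,s_{n+1}$, with ties broken in favour of $s_{n+1}$. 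Inserting $s_{n+1}$ into the sorted calibration list does not change which value is the $k$-th smallest when $s_{n+1}\le s_{(k)}$.

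\textbf{Step 3 (rank uniformity).} By exchangeability, the rank of $s_{n+1}$ among the $n+1$ scores, with ties broken uniformly at random, is uniform on $\{1,\dots,n+1\}$. Counting ties in favour of $s_{n+1}$ can only lower its rank, so
\[
\mathbb{P}(s_{n+1}\le s_{(k)})\ \ge\ \frac{k}{n+1}\ \ge\ 1-\alpha .
\]

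\textbf{Main obstacle.} The only delicate point is ties in Steps 2 and 3. I would handle it either by the random tie-breaking argument above or by the equivalent bound $\mathbb{P}(\text{rank}\le k)\ge k/(n+1)$. That bound follows because the events $\{s_{n+1}\text{ is among the } k \text{ smallest}\}$ and $\{s_i\text{ is among the } k \text{ smallest}\}$ are equally likely for every $i$. With ties counted favourably, at least $k$ of the $n+1$ indices satisfy such an event, so summing over $i$ gives $(n+1)\,\mathbb{P}(\cdot)\ge k$.
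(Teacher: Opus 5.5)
The paper does not prove this theorem; it cites it from Vovk et al. Your argument is the standard proof behind that citation and it is correct: exchangeable scores, reduction of coverage to $s_{n+1}$ lying among the $k=\lceil (n+1)(1-\alpha)\rceil$ smallest augmented scores, and the bound $k/(n+1)\ge 1-\alpha$, with the $k>n$ case and ties (by random tie-breaking or the symmetric counting bound) handled properly.
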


\section{Distribution-free bounds for the absolute ranks of calibration items}
\label{sec:rank_bound}
TCPR \citep{fermanian2025transductive} proposes two categories of absolute rank bounds: theoretical bounds and numerical bounds. 
The theoretical bounds admit explicit closed-form expressions, whereas the numerical bounds rely on Monte Carlo sampling, which can be computationally expensive when the test dataset is large.
The numerical bounds are instantiated through two constructions: linear envelopes and quantile envelopes. 
Below, we describe three approaches proposed in detail.

\paragraph{Theoretical bound}
TCPR relies on Theorem 2.4 of \citet{gazin2024transductive} and derives a well-designed theoretical bound.
Specifically, define $R_i^-$ and $R_i^+$ as:
\begin{equation}
\label{eq:theoretical-envelope}
R_i^\pm
=
R_i^c
+
(m+1)
\left(
\frac{R_i^c}{n}
\pm
\sqrt{
\frac{
\log\!\bigl(4\sqrt{2}\pi\sqrt{\tau_{n,m}/\delta}\bigr)
}{
\tau_{n,m}
}
}
\right),
\qquad
\tau_{n,m} = \frac{nm}{n+m}.
\end{equation}
Under Assumption~\ref{ass:exchange}, the explicit bounds satisfy:
\begin{equation}
\label{eq:theoretical-bound}
\mathbb{P}\!\left(
\forall i \in [n]:
R_i^{c+t} \in [R_i^-, R_i^+]
\right) \ge 1-\delta.
\end{equation}
However, this upper bound is extremely conservative.
To address this, TCPR proposes two numerical bounds.

\paragraph{Linear envelope}
To obtain tighter bounds, a first refinement consists in restricting attention to
rank intervals of the same parametric form as the theoretical bound, but calibrating their width
data-dependently.
Specifically, TCPR considers bounds of the form
\begin{equation}
\label{eq:linear-envelope}
R_i^\pm
=
R_i^c
+
(m+1)
\left(
\frac{R_i^c}{n}
\pm c
\right),
\end{equation}
where $c > 0$ is a scalar parameter determined by the linear envelope algorithm.
The parameter $c$ is selected so that the resulting envelope contains the absolute ranks
of all calibration items with probability at least $1-\delta$.
To achieve this, we exploit the fact that the joint distribution of the ordered vector
$(R_{(1)}^{c+t}, \dots, R_{(n)}^{c+t})$ is universal under exchangeability.
Thus, we can simulate $(R_{(1)}^{c+t}, \dots, R_{(n)}^{c+t})$ using a Monte-Carlo approach as detailed in Algorithm \ref{alg:simRtestcal}.

\begin{algorithm}[H]
    \caption{Simulation of $R^{c+t}$}
    \label{alg:simRtestcal}
    \begin{algorithmic}[1]
        \STATE Draw $n + m$ uniform random variable $U_i$ on $[0,1]$.
        \FOR{$i = 1, \ldots, n$}
        \STATE $\tilde{R}^{c+t}_i \gets \text{R}(U_i, \{U_j\}_{j\in[n+m]} )$
        \ENDFOR
        \STATE \textbf{Output:} Sort $\tilde{R}^{c+t}$
    \end{algorithmic}
\end{algorithm}

By repeatedly simulating this distribution, we choose the smallest value of $c$
such that
\begin{equation}
\label{eq:linear-envelope-prob}
\mathbb{P}\!\left(
\forall i \in [n]:
R_i^{c+t} \in [R_i^-, R_i^+]
\right) \ge 1-\delta .
\end{equation}
The full algorithm of linear envelope is detailed in Algorithm \ref{alg:linear_env}.
Compared to the theoretical bound, the linear envelope typically yields substantially
narrower intervals.
However, because the same width parameter $c$ is used for all ranks, the resulting bounds
may still be suboptimal for specific ranks, which motivates the quantile envelope.

\begin{algorithm}[H]
\caption{Linear envelope procedure}
\label{alg:linear_env}
\begin{algorithmic}[1]
\STATE \textbf{Input:} $n, m, K,$ and $\delta \in (0,1)$, $(R_i^{c})_{i \in [n]}$

\FOR{$k = 1, \ldots, K$}
    \STATE $\tilde{R}^{(k)} \gets$ output of \textbf{Algorithm~\ref{alg:simRtestcal}}
\ENDFOR

\STATE $\tilde{c} \gets \inf \Bigl\{ c > 0 :
\frac{1}{K}\sum_{k=1}^K
1\!\left(
\exists i\in[n] :
\tilde{R}_i^{(k)} \notin
\bigl[
R_i^c + (m+1)(R_i^c/n - c),\,
R_i^c + (m+1)(R_i^c/n + c)
\bigr]
\right)
\le \delta
\Bigr\}$

\FOR{$i = 1, \ldots, n$}
    \STATE $\widehat{R}_i^- \gets R_i^c + (m+1)\bigl(R_i^c/n - \tilde{c}\bigr)$
    \STATE $\widehat{R}_i^+ \gets R_i^c + (m+1)\bigl(R_i^c/n + \tilde{c}\bigr)$
\ENDFOR

\STATE \textbf{Output:} $\widehat{R}_{R_i^c}^-, \widehat{R}_{R_i^c}^+$ for all $i \in [n]$.
\end{algorithmic}
\end{algorithm}

\paragraph{Quantile envelope}
To further improve adaptivity, TCPR introduces an envelope approach based on rank-wise quantiles.
For each relative calibration rank $r \in [n]$, let $R^{c+t}(r)$ denote the absolute rank of the calibration
item with relative rank $r$.
Under exchangeability, the distribution of $R^{c+t}(r)$ is universal and can be approximated
by Monte Carlo simulation.

For a given parameter $\gamma \in (0,1/2)$, we define
\begin{equation}
\label{eq:quantile-envelope}
R_r^- = \text{Quantile}(\gamma,R^{c+t}(r)\bigr),
\qquad
R_r^+ = \text{Quantile}(1-\gamma,R^{c+t}(r)\bigr),
\end{equation}
where $Q_u(\cdot)$ denotes the $u$-th quantile of the simulated distribution.
The parameter $\gamma$ is chosen as large as possible subject to the global coverage constraint
\begin{equation}
\label{eq:quantile-envelope-prob}
\mathbb{P}\!\left(
\forall i \in [n]:
R_i^{c+t} \in [R_{R_i^c}^-, R_{R_i^c}^+]
\right) \ge 1-\delta .
\end{equation}
The full algorithm of the quantile envelope is detailed in Algorithm \ref{alg:quantile_env}.
Unlike the linear envelope, the quantile envelope adapts to the variability of the absolute rank
distribution at each calibration rank.
This produces tighter bounds than linear envelope, but still leads to conservative coverage in practice.

\begin{algorithm}[H]
	\caption{Quantile envelope procedure}
	\label{alg:quantile_env}
	\begin{algorithmic}[1]
		\STATE \textbf{Input:} $n, m, K,$ and $\delta \in (0,1)$, $(R_i^{c})_{i \in [n]}$
		\FOR{$k = 1, \ldots, K$}
		\STATE $\tilde{R}^{(k)} \gets$ Output of \textbf{Algorithm \ref{alg:simRtestcal}}
		\ENDFOR
		\STATE $\tilde{\gamma} \gets \max\gamma$ 

		such that:
		$\sum_k 1 \{\exists\,i: \tilde{R}_{i}^{(k)} \notin [\text{Quantile}(\gamma,(\tilde{R}_i^{(l)})), \text{Quantile}(1-\gamma,(\tilde{R}_i^{(l)}))]\}$ 
		$\leq K\delta$
		
		\FOR{$j = 1, \ldots, n$}
		\STATE $\widehat{R}^{-}_j \gets  \text{Quantile}(\tilde{\gamma},(\tilde{R}_i^{(l)})_{l\in[K]})$
		\STATE $\widehat{R}^{+}_j \gets  \text{Quantile}(1-\tilde{\gamma},((\tilde{R}_i^{(l)})_{l\in[K]})$
		\ENDFOR
		\STATE \textbf{Output:} $\widehat{R}_{R_i^c}^{-}, \widehat{R}_{R_i^{c}}^{+}$ for $i \in [n]$.
	\end{algorithmic}
\end{algorithm}




\section{Details of Datasets and Models}
\label{app:details}

\subsection{Datasets}

\paragraph{Experimental setup for Figure \ref{fig:tcpr_vs_oracle}}
We follow the data generation process of \citet{pedregosa2012learning, gazin2024transductive}, with minor modifications to better match our experimental setting. Specifically, we generate a total of $n_{\text{train}} + n + m$ i.i.d.\ samples $(X_i, Y_i)$, where $n_{train}=10000$, $n=200$, and $m=300$.
Each covariate vector $X_i \in \mathbb{R}^{d}$ is drawn from a standard Gaussian distribution, $X_i \sim \mathcal{N}(0, I_d)$, with dimension $d = 10$. The response variable is generated according to
\[
Y_i = \bigl(1 + \exp(-w^\top X_i)\bigr)^{-1} + \varepsilon_i,
\]
where $w = (1,\ldots,1) \in \mathbb{R}^{d}$ and the noise terms $\varepsilon_i$ are independent and distributed as $\mathcal{N}(0, 0.01)$.

For each observation, the rank $R_i$ is defined as the rank of $Y_i$ among all sampled responses. 
We first use the $n_{\text{train}}$ samples to train a RankNet model.
The pre-trained RankNet is then used to apply TCPR and oracle CP on the remaining $n + m$ items.

\paragraph{Synthetic Data.}
To rigorously evaluate the proposed methods under controlled conditions, we generate synthetic data using a linear latent model with additive noise. For each instance $i$, the feature vector $x_i \in \mathbb{R}^d$ is sampled from a standard multivariate Gaussian distribution $\mathcal{N}(0, I_d)$. The ground-truth score $y_i$ is generated as:
\begin{equation}
    y_i = x_i^\top w + \epsilon_i, \quad \epsilon_i \sim \mathcal{N}(0, \sigma^2),
\end{equation}
where $w \in \mathbb{R}^d$ is a fixed weight vector sampled from $\mathcal{N}(0, 1)$ and normalized to unit length ($\|w\|_2 = 1$). In our experiments, we set the feature dimension $d=20$ and the noise level $\sigma=0.2$. The absolute ranks $R_i^{c+t}$ are then derived by sorting these scores $y_i$ across the combined calibration and test sets.

\paragraph{Yummly28k.}
This dataset \citep{wilber2015learning} contains 27,638 recipes with metadata. We use the raw ingredient lines to simulate a preference learning task. Specifically, we extract TF-IDF features ($d=101$) from the ingredient text. To simulate a ranking task, we define a target preference vector $x^*$ and calculate the ground-truth scores as the negative Euclidean distance to this target: $y_i = -\|x_i - x^*\|_2$. This simulates a scenario where items are ranked based on their similarity to a user's ideal recipe profile.

\paragraph{ESOL.}
The ESOL dataset \citep{delaney2004esol} is a standard benchmark in chemoinformatics, containing the water solubility (log solubility in mols per litre) of 1,128 molecular compounds. Each molecule is represented by 1,024-bit Morgan Fingerprints. We treat this as a ranking problem where the goal is to rank molecules based on their solubility, which is critical for drug discovery and lead optimization.

\paragraph{Anime Recommendation.}
This dataset is a large-scale Learning-to-Rank (LTR) benchmark consisting of 16,681 movies and 15,163 users. Each (user, movie) tuple is associated with a rating from 0 to 10. Following the transductive setting, we aim to quantify the uncertainty of a target model relative to a high-capacity reference model. The reference model is a large LambdaMART ensemble (400 trees), which defines the ground-truth ordering $R_i^{c+t}$. The features include both item characteristics and user-movie interactions.

\subsection{Model Architectures and Hyperparameters}

\paragraph{RankNet.}
In our implementation, RankNet is configured as a deep regression network to generate continuous scoring functions. The architecture consists of a 5-layer fully connected MLP. Each hidden layer contains 128 neurons, integrated with Batch Normalization, ReLU activation, and a Dropout layer ($p=0.2$) to enhance generalization. The model is trained to minimize the Mean Squared Error (MSE) loss using the AdamW optimizer with a starting learning rate of $10^{-3}$ and weight decay of $10^{-4}$. We employ the \texttt{ReduceLROnPlateau} scheduler to dynamically adjust the learning rate during training. To ensure convergence stability, we utilize a batch size of 128 and implement an early stopping mechanism with a patience of 15 epochs.

\paragraph{LambdaMART.}
We utilize the LightGBM implementation of LambdaMART configured in regression mode to handle continuous label distributions across diverse datasets. The model uses Mean Squared Error (MSE) as the primary training metric. Key hyperparameters include a learning rate of 0.03 and a high-capacity tree structure with 127 leaves per tree. To prevent overfitting while maintaining predictive power, we set both the feature fraction and bagging fraction to 0.9, with bagging performed every 5 iterations. The training process incorporates an early stopping patience of 50 rounds based on validation performance.

\paragraph{RankSVM.}
Our RankSVM is implemented using Kernel Ridge Regression (KRR) to effectively capture non-linear ranking patterns. We employ the Random Kitchen Sinks method via \texttt{RBFSampler} to approximate the Radial Basis Function (RBF) kernel, mapping input features into a 2,000-dimensional randomized feature space with a kernel scale $\gamma=0.01$. A Ridge regression model with a regularization parameter $\alpha=1.0$ is then fitted on these mapped features. This kernelized approach allows the model to produce stable and expressive scoring functions that capture the underlying ranking uncertainty more effectively than a standard linear SVM.

\section{Proofs}
\label{app:proofs}

\subsection{Proof of Proposition \ref{prop:nh}}
\label{app:proof_nh}

\begin{proof}
Let $N = n + m$ be the total number of items. Under the exchangeability assumption, the joint distribution of the ranks is invariant under permutation. Consider the sorted sequence of all $N$ items, denoted as $z_{(1)} < z_{(2)} < \dots < z_{(N)}$. We can view the assignment of items to either the calibration set $\mathcal{I}_{cal}$ or the test set $\mathcal{I}_{test}$ as a process of drawing $N$ labels without replacement, where $n$ labels correspond to ``Calibration" and $m$ labels correspond to ``Test". Due to exchangeability, all $\binom{N}{n}$ possible subsets of positions for the calibration items are equally likely.

Let $r = R_i^c$ be the observed relative rank of item $i$ within the calibration set. This implies that item $i$ is the $r$-th calibration item encountered in the sorted sequence $z_{(1)}, \dots, z_{(N)}$. The variable $R_i^t$ counts the number of test items appearing before item $i$ in this sorted sequence. Let $k$ be a realization of $R_i^t$. The event $\{R_i^t = k \mid R_i^c = r\}$ occurs if and only if the item $i$ is located at the absolute position $r+k$ in the sorted sequence.

For this to happen, the arrangement of labels must satisfy specific conditions. First, the position $r+k$ is fixed as the $r$-th calibration item. Second, among the preceding $r+k-1$ positions, there must be exactly $r-1$ calibration items (and thus $k$ test items). The number of ways to choose these positions is $\binom{r+k-1}{r-1} = \binom{r+k-1}{k}$. Third, the remaining $N - (r+k)$ positions must contain the remaining $n-r$ calibration items (and $m-k$ test items). The number of ways to arrange this suffix is $\binom{N-r-k}{n-r} = \binom{N-r-k}{m-k}$.

The probability is the ratio of favorable arrangements to the total number of ways to choose $n$ calibration positions from $N$ spots:
\begin{equation}
    \mathbb{P}(R_i^t = k \mid R_i^c = r) = \frac{\binom{r+k-1}{k} \binom{N-r-k}{m-k}}{\binom{N}{m}}, \quad k \in \{0, \dots, m\}.
\end{equation}
This is precisely the probability mass function of the Negative Hypergeometric distribution with parameters $N$, $m$, and $r$.
\end{proof}

\subsection{Proof of Theorem \ref{thm:validity}}
\label{app:proof_validity}

\begin{lemma}[Score Exchangeability]
\label{lem:score_exch}
Suppose the data $Z = \{(X_i, Y_i)\}_{i=1}^{N}$ are exchangeable. Then the sequence of non-conformity scores $S=(S_1,\dots,S_N)$ is finitely exchangeable. 
\end{lemma}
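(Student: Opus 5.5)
The idea is to write each score $S_i$ as one fixed, permutation-equivariant function of the full dataset, evaluated at index $i$, and then let exchangeability of $Z$ pass to the scores. Concretely, I take $S_i = s(X_i, R_i^{c+t})$ for every $i \in [N]$. Here $R_i^{c+t} = \mathrm{R}(Y_i, \Lbag Y_j \Rbag_{j\in[N]})$ is the rank of $Y_i$ among all $N$ responses.

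For $i\in[n]$ this agrees with the earlier definition $R_i^c + R_i^t$, since splitting the bag into calibration and test parts does not change the count. The score $s$ takes one of two forms:
\begin{itemize}
\item in the RA setting, it depends on $X_i$ and on $A(X_i, \Lbag X_j\Rbag_{j\in[N]})$;
\item in the VA setting, it depends on $A(X_i)$ and on $\mathrm{R}^{-1}(\cdot, \Lbag A(X_j)\Rbag_{j\in[N]})$.
\end{itemize}
So there is a measurable map $g$ such that
\begin{equation*}
S_i = g\bigl((X_i,Y_i), \Lbag (X_j,Y_j)\Rbag_{j\in[N]}\bigr), \qquad i \in [N].
\end{equation*}
The point is that the second argument is a multiset and so is symmetric in its entries. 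This relies on the model $A$ being pre-trained, i.e.\ independent of the $N$ items, which the paper assumes.

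Next I show equivariance. Fix a permutation $\pi$ of $[N]$ and write $Z^\pi = (Z_{\pi(1)},\dots,Z_{\pi(N)})$. The bag $\Lbag Z_{\pi(j)}\Rbag_j$ equals $\Lbag Z_j\Rbag_j$. Hence the score vector computed from $Z^\pi$ is $(S_{\pi(1)},\dots,S_{\pi(N)})$. In other words, $S(Z^\pi) = S(Z)^\pi$ for the coordinatewise map $S(\cdot)$.

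Exchangeability then gives $Z^\pi \overset{d}{=} Z$. Applying the measurable map $S(\cdot)$ to both sides yields
\begin{equation*}
(S_{\pi(1)},\dots,S_{\pi(N)}) = S(Z^\pi) \overset{d}{=} S(Z) = (S_1,\dots,S_N),
\end{equation*}
which is finite exchangeability. The no-ties part of Assumption~\ref{ass:exchange} ensures that $\mathrm{R}$ and $\mathrm{R}^{-1}$ are well defined and single-valued almost surely, so $g$ is well defined almost everywhere. This does not affect the distributional identity.

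The main obstacle is careful bookkeeping rather than anything deep. I have to confirm that $g$ is genuinely the same function for every index, with no dependence on whether $i$ lies in the calibration or test set. Framing the ranks through the full bag $\Lbag Y_j\Rbag_{j\in[N]}$ rather than through $R_i^c + R_i^t$ removes the asymmetric split, and that resolves the issue. I would also note measurability briefly, since $\mathrm{R}$ is a finite sum of indicators and $\mathrm{R}^{-1}$ is an order statistic.
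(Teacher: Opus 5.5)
Your proof is correct and follows the paper's argument: the paper also writes the score vector as a map $\Phi$ of the full dataset, built from the rank of $Y_i$ in the whole bag and from $A(X_i,\Lbag X_j\Rbag_{j\in[N]})$. It then notes $\Phi(\mathbf{Z}_\pi)=(S_{\pi(1)},\dots,S_{\pi(N)})$ and pushes $\mathbf{Z}\overset{d}{=}\mathbf{Z}_\pi$ through $\Phi$.

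Your added remarks (that $A$ must not depend on the $N$ items, and that the full-bag rank removes the calibration/test asymmetry) are worthwhile. However, the no-ties assumption is not needed for well-definedness: $\mathrm{R}$ and $\mathrm{R}^{-1}$ are deterministic symmetric functions of the bag by their defining formulas, and that assumption concerns the $Y_i$, not the values $A(X_j)$ to which $\mathrm{R}^{-1}$ is applied.
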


\begin{proof}
Let $Z_i = (X_i, Y_i)$ for $i=1, \dots, N$, where $N=n+m$. Let $\mathbf{Z} = (Z_1, \dots, Z_N)$ denote the full dataset.
Let $\Phi: (\mathcal{X} \times \mathbb{R})^N \to \mathbb{R}^N$ be the mapping that transforms the raw dataset into the vector of non-conformity scores $\mathbf{S} = (S_1, \dots, S_N)$. Specifically, the $i$-th component is given by:
\begin{equation}
    S_i = \Phi_i(\mathbf{Z}) = s\left( \text{Rank}(Y_i, \{Y_j\}_{j=1}^N), A(X_i, \{X_j\}_{j=1}^N) \right).
\end{equation}
Since $\mathbf{Z}$ is exchangeable, $\mathbf{Z} \overset{d}{=} \mathbf{Z}_\pi$ for any permutation $\pi$, where $\mathbf{Z}_\pi = (Z_{\pi(1)}, \dots, Z_{\pi(N)})$.
The ranking operation and score function depend only on the values and relative order of items, not their indices. Thus, the mapping $\Phi$ is permutation-equivariant. Specifically, if we permute the input data, the output scores are permuted by the same indices:
\begin{equation}
    \Phi(\mathbf{Z}_\pi) = (S_{\pi(1)}, \dots, S_{\pi(N)}).
\end{equation}
Combining these properties:
\begin{equation}
    (S_1, \dots, S_N) = \Phi(\mathbf{Z}) \overset{d}{=} \Phi(\mathbf{Z}_\pi) = (S_{\pi(1)}, \dots, S_{\pi(N)}).
\end{equation}
Thus, the joint distribution of scores is invariant under permutation, establishing finite exchangeability.
\end{proof}

\begin{proof}[Proof of Theorem \ref{thm:validity}]
Consider a random index $I$ sampled uniformly from the calibration indices $\{1, \dots, n\}$, independent of the data. The random variable $S_I^{true}$ represents the score of this randomly selected calibration item. Its marginal CDF is derived as follows:
\begin{align}
    \Prob(S_I^{true} \le s) &= \sum_{j=1}^n \Prob(S_I^{true} \le s \mid I=j) \cdot \Prob(I=j) \\
    &= \sum_{j=1}^n \Prob(S_j^{true} \le s) \cdot \frac{1}{n} \\
    &= \frac{1}{n} \sum_{j=1}^n F_j(s) \\
    &= F_{mix}(s).
\end{align}
Thus, $F_{mix}(s)$ is precisely the marginal cumulative probability that a randomly chosen calibration score falls below $s$.

From the exchangeability established in Lemma \ref{lem:score_exch}, the score of a new test item $S_{n+j}^{true}$ ($j \in \{1, \dots, m\}$) follows the same marginal distribution as $S_I^{true}$. Thus:
\begin{equation}
    S_{n+j}^{true} \overset{d}{=} S_I^{true} \sim F_{mix}.
\end{equation}
The CR algorithm selects the threshold $s^*$ such that $F_{mix}(s^*) \ge 1 - \alpha$. By the definition of the quantile function:
\begin{equation}
    \Prob(S_{n+j}^{true} \le s^*) = \Prob(S_I^{true} \le s^*) = F_{mix}(s^*) \ge 1 - \alpha.
\end{equation}
This guarantees that the prediction set $\hat{\calC}_j$ contains the true rank $R_j^{c+t}$ with probability at least $1-\alpha$.
\end{proof}

\subsection{Proof of Theorem \ref{thm:validity_mDCR}}
\label{app:proof_mecp_validity}

\begin{proof}
The validity of the MECP algorithm is established by demonstrating that the sequence composed of the simulated calibration scores and the true (unobserved) test score is exchangeable. This property allows for the application of standard conformal prediction guarantees.

Consider the probability space induced jointly by the data generating process and the algorithm's internal randomization. Let $\mathbf{Z} = \{Z_1, \dots, Z_{n+m}\}$ denote the exchangeable dataset drawn from $\mathcal{P}_{XY}$, where $Z_k = (X_k, Y_k)$. For a specific test item $j$, the true non-conformity score is given by $S_{n+j} = s(R_{n+j}^c + R_{n+j}^t, \hat{R}_{n+j}^{c+t})$, where $R_{n+j}^t$ is the unobserved count of other test items ranked lower than $Z_{n+j}$. By Proposition \ref{prop:nh}, the conditional distribution of this rank component is $R_{n+j}^t \mid R_{n+j}^c \sim \text{NegHypergeometric}(n+m, m, R_{n+j}^c)$.

Conversely, for each calibration item $i \in \{1, \dots, n\}$, the MECP algorithm computes a proxy score $\tilde{S}_i = s(R_i^c + \tilde{R}_i^t, \hat{R}_i^{c+t})$. Here, $\tilde{R}_i^t$ is a random variable explicitly sampled by the algorithm from the same distribution: $\tilde{R}_i^t \sim \text{NegHypergeometric}(n+m, m, R_i^c)$.

Due to the exchangeability of the original data $\mathbf{Z}$, the marginal distributions of the variables $Z_i$ and $Z_{n+j}$ are identical, implying that their relative ranks $R_i^c$ and $R_{n+j}^c$ are identically distributed. Consequently, the pairs consisting of the data point and its associated rank increment (whether real or simulated) share the same joint distribution:
\begin{equation}
    (Z_i, \tilde{R}_i^t) \overset{d}{=} (Z_{n+j}, R_{n+j}^t).
\end{equation}
It follows that the computed proxy score $\tilde{S}_i$ and the true test score $S_{n+j}$ are identically distributed, i.e., $\tilde{S}_i \overset{d}{=} S_{n+j}$.

Furthermore, since the internal random variables $\{\tilde{R}_i^t\}_{i=1}^n$ are generated independently conditional on the data, the combined sequence of scores $V = (\tilde{S}_1, \dots, \tilde{S}_n, S_{n+j})$ forms an exchangeable sequence. We calculate the threshold $\hat{s}^*$ as the $\lceil(n+1)(1-\alpha)\rceil / (n+1)$-quantile of the set $\{\tilde{S}_1, \dots, \tilde{S}_n\} \cup \{\infty\}$. Applying the fundamental validity property of conformal prediction for exchangeable scalar scores, we obtain:
\begin{equation}
    \Prob(S_{n+j} \le \hat{s}^*) \ge 1 - \alpha.
\end{equation}
Since the event $R_{n+j}^{c+t} \in \widehat{\mathcal{C}}_{j}$ is equivalent to $S_{n+j} \le \hat{s}^*$, this concludes the proof.
\end{proof}

\subsection{Proof of Theorem \ref{thm:efficiency}}
\label{app:proof_efficiency}

\begin{lemma}\label{lemma:A2}
Let $F_i$ be the CDF of the true non-conformity score $S_i^{true}$ for the $i$-th calibration item (as modeled by DCR). Let $S_i^{TCPR}$ be the proxy score defined in TCPR using an envelope $[R_i^-, R_i^+]$ with coverage probability $1-\delta$. Then,
\[
S_i^{TCPR} \ge \text{Quantile}(1-\delta, F_i).
\]
\end{lemma}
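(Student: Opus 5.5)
The argument compares the TCPR proxy score with the conditional law of the true score through the coverage property of the envelope. Fix a calibration item $i$. Recall $S_i^{TCPR}=\tilde s_i=\max_{r\in[R_i^-,R_i^+]} s(X_i,r)$ and $S_i^{true}=s(X_i,R_i^{c+t})$. The starting observation is deterministic: on the event $E_i:=\{R_i^{c+t}\in[R_i^-,R_i^+]\}$ the true rank is one of the ranks over which the maximum is taken, so $S_i^{true}\le \tilde s_i$ on $E_i$.

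Next I would condition on the observed relative ranks $\{R_k^c\}_{k\in[n]}$ and the covariates, which together determine both $R_i^\pm$ and $\tilde s_i$. The TCPR envelopes (theoretical, linear, quantile) are functions of $(R_k^c)_{k}$, $n,m,\delta$ only. The simultaneous guarantee \eqref{eq:tcpr-rank-bounds} implies the marginal statement $\mathbb{P}(E_i)\ge 1-\delta$. The aim is to upgrade this to the conditional statement $\mathbb{P}(E_i\mid R_i^c)\ge 1-\delta$. Given that, since $\tilde s_i$ is fixed under the conditioning,
\[
F_i(\tilde s_i)=\mathbb{P}(S_i^{true}\le \tilde s_i\mid R_i^c)\ge \mathbb{P}(E_i\mid R_i^c)\ge 1-\delta ,
\]
using Proposition~\ref{prop:nh} to identify $F_i$ as the conditional CDF. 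By the definition of the generalized inverse, $\text{Quantile}(1-\delta,F_i)=\inf\{t:F_i(t)\ge 1-\delta\}\le \tilde s_i$, which is the claim.

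The main obstacle is the conditioning step. The envelope coverage is only guaranteed on average over $R_i^c$ (and uniformly over $i$), not for each fixed relative rank. For the quantile envelope this is essentially automatic: $R_r^\pm$ are the $\gamma$ and $1-\gamma$ quantiles of the law of $R^{c+t}(r)$, which by Proposition~\ref{prop:nh} is exactly the shifted Negative Hypergeometric law given $R^c=r$. So $\mathbb{P}(E_i\mid R_i^c=r)\ge 1-2\gamma$. The simultaneous constraint forces each marginal miss probability to be at most $\delta$, and hence $2\gamma\le\delta$ up to Monte-Carlo error. For the linear and theoretical envelopes I would argue the same way. Each marginal event satisfies $\mathbb{P}(E_i^c\mid R_i^c=r)\le \mathbb{P}(\exists k:E_k^c\mid R_i^c=r)$. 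I would try to show that the right-hand side is bounded by $\delta$ for every $r$ by exploiting that the event $\{\exists k: E_k^c\}$ does not depend on which label is $i$, which with the symmetry of the sorted vector gives equality with the unconditional probability. If that symmetry argument fails, I would state the lemma under the interpretation that the envelope is rank-wise valid, i.e.\ $\mathbb{P}(E_i\mid R_i^c)\ge 1-\delta$, which is what "coverage probability $1-\delta$" in the lemma should mean.
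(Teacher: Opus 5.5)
Your proposal is correct. Its skeleton matches the paper's proof: the deterministic inclusion $\{R_i^{c+t}\in[R_i^-,R_i^+]\}\subseteq\{S_i^{true}\le \tilde s_i\}$, then a coverage bound giving $F_i(\tilde s_i)\ge 1-\delta$, then the generalized inverse.

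You differ in the middle step. The paper passes directly from the \emph{marginal} guarantee $\mathbb{P}(R_i^{c+t}\in[R_i^-,R_i^+])\ge 1-\delta$ to $F_i(S_i^{TCPR})\ge 1-\delta$, even though $F_i$ is a CDF conditional on $R_i^c$. You rightly flag this as the real issue, and your symmetry argument closes it for all three envelopes at once. The precise fact is as follows. Under Assumption~\ref{ass:exchange} the overall rank permutation is uniform, so the sorted positions $(R^{c+t}(1),\dots,R^{c+t}(n))$ of the calibration items are independent of the labelling $(R_k^c)_{k\in[n]}$. Each TCPR envelope depends on item $k$ only through $R_k^c$ (plus independent Monte Carlo draws). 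Hence the simultaneous failure event equals $\{\exists r\in[n]: R^{c+t}(r)\notin[R_r^-,R_r^+]\}$, which is a function of the sorted vector alone. Therefore $1-\mathbb{P}(E_i\mid R_i^c=r)\le\mathbb{P}(\exists r': R^{c+t}(r')\notin[R_{r'}^-,R_{r'}^+])\le\delta$ for every $r$. The same independence is what identifies the law of $R^{c+t}(r)$ with the conditional law in Proposition~\ref{prop:nh}. Your version thus yields a rigorous lemma at the cost of one easy probabilistic fact, and your fallback reinterpretation (``rank-wise valid'') is unnecessary.

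Two points to tighten. First, ``hence $2\gamma\le\delta$'' is a non sequitur. For the discrete Negative Hypergeometric law, the rank-$r$ miss probability is only bounded \emph{above} by $2\gamma$, so bounding it by $\delta$ says nothing about $\gamma$. Drop it: the per-rank miss bound $\le\delta$ is already the conclusion you need.

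Second, do not condition on the covariates in the sense of the true conditional law. Given $X_1,\dots,X_N$, the ranks are no longer Negative Hypergeometric, and $\mathbb{P}(E_i\mid (R_k^c)_k,(X_k)_k)\ge 1-\delta$ fails in general. Instead read $F_i$ as DCR models it: $F_i(t)=\mathbb{P}_{K}\bigl(s(X_i,R_i^c+K)\le t\bigr)$ with $K\sim\mathrm{NegHypergeometric}(N,m,R_i^c)$ and the covariates held fixed. Then $F_i(\tilde s_i)\ge\mathbb{P}_K(R_i^c+K\in[R_i^-,R_i^+])$ holds pointwise in the covariates. The right-hand side depends only on $R_i^c$ and equals $\mathbb{P}(E_i\mid R_i^c)$. (As in the paper, the simulated envelopes reach level $1-\delta$ only up to Monte Carlo error, and the lemma inherits this.)
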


\begin{proof}
Conditioned on the observed data, consider the randomness of the unobserved absolute rank $R_i^{c+t}$. By the definition of the TCPR proxy score, for any realization $R \in [R_i^-, R_i^+]$, we have $s(R, \hat{R}_i^{c+t}) \le S_i^{TCPR}$.
This implies the following inclusion of events:
$$
\{ R_i^{c+t} \in [R_i^-, R_i^+] \} \subseteq \{ S_i^{true} \le S_i^{TCPR} \}.
$$
Applying the probability measure to both sides:
$$
F_i(S_i^{TCPR}) = \mathbb{P}(S_i^{true} \le S_i^{TCPR}) \ge \mathbb{P}(R_i^{c+t} \in [R_i^-, R_i^+]).
$$
Since the TCPR envelope is constructed to satisfy marginal coverage of at least $1-\delta$ (i.e., $\mathbb{P}(R_i^{c+t} \in [R_i^-, R_i^+]) \ge 1-\delta$), we have:
$$
F_i(S_i^{TCPR}) \ge 1 - \delta.
$$
By the definition of the quantile function $Q(p) = \inf \{x : F(x) \ge p\}$, the inequality $F_i(S_i^{TCPR}) \ge 1-\delta$ directly implies:
$$
S_i^{TCPR} \ge \text{Quantile}(1-\delta, F_i).
$$
\end{proof}

\begin{lemma}\label{lemma:A1}
Let $F_1, \dots, F_n$ be cumulative distribution functions of discrete distributions, and let $F = \frac{1}{n} \sum_{i=1}^n F_i$. For constants $0 < \delta < \alpha < 1$, define the component quantiles $p_i = \text{Quantile}(1-\delta, F_i)$ and the mixture quantile $q = \text{Quantile}(1-\alpha, F)$. Let $T = \text{Quantile}(1-\alpha+\delta, \{p_i\}_{i=1}^n)$, defined as the $\lceil n(1-\alpha+\delta) \rceil$-th order statistic of $\{p_i\}$. Then,
\[
q \le T.
\]
\end{lemma}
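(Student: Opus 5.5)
It suffices to show $F(T) \ge 1-\alpha$: since $q = \inf\{x : F(x) \ge 1-\alpha\}$, this immediately gives $q \le T$. So the plan is to lower-bound the mixture CDF at the point $T$ by counting how many components already have most of their mass at or below $T$.

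\textbf{Step 1 (counting).} Let $k = \lceil n(1-\alpha+\delta)\rceil$. Because $T$ is the $k$-th order statistic of $\{p_i\}_{i=1}^n$, at least $k$ indices satisfy $p_i \le T$. Call this index set $\mathcal{I}$, so $|\mathcal{I}| \ge k \ge n(1-\alpha+\delta)$.

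\textbf{Step 2 (componentwise bound).} For each $i \in \mathcal{I}$, monotonicity of $F_i$ gives $F_i(T) \ge F_i(p_i)$. Right-continuity of the CDF $F_i$ (discreteness is not even needed) gives $F_i(p_i) \ge 1-\delta$, because $p_i$ is the infimum of $\{x : F_i(x) \ge 1-\delta\}$. For $i \notin \mathcal{I}$ we use only $F_i(T) \ge 0$.

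\textbf{Step 3 (averaging).} Combining the two steps,
\[
F(T) = \frac{1}{n}\sum_{i=1}^n F_i(T) \ge \frac{|\mathcal{I}|}{n}(1-\delta) \ge (1-\alpha+\delta)(1-\delta).
\]

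The main obstacle is that this product need not exceed $1-\alpha$. Expanding,
\[
(1-\alpha+\delta)(1-\delta) = 1-\alpha + \delta(\alpha-\delta),
\]
and since $0<\delta<\alpha$, the term $\delta(\alpha-\delta)$ is positive. So the expanded form does give $F(T) \ge 1-\alpha$ (the same algebra as in Theorem~\ref{thm:tcpr-validity}). This yields $q \le T$.

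If the intended comparison is instead with the DCR level $\lceil (n+1)(1-\alpha)\rceil/(n+1)$, I would redo Step 3 with $k/n$ in place of $1-\alpha+\delta$. The slack $\delta(\alpha-\delta)$ together with the ceiling would then have to absorb the $1/(n+1)$ discrepancy. I expect this could require a mild condition, and I would check it last.
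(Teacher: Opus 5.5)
Your proof is correct and follows the paper's argument essentially line for line: take the at least $\lceil n(1-\alpha+\delta)\rceil$ indices with $p_i \le T$, use $F_i(T) \ge F_i(p_i) \ge 1-\delta$ on them, and average to get $F(T) \ge (1-\alpha+\delta)(1-\delta) = 1-\alpha+\delta(\alpha-\delta) > 1-\alpha$, hence $q \le T$. Your final paragraph is not needed for the lemma as stated, but the concern it raises is legitimate for how the paper applies the lemma in Theorem~\ref{thm:efficiency}, where $\text{Quantile}(1-\alpha, F_{\text{mix}})$ is equated with $s^*_{DCR}$ even though $s^*_{DCR}$ is defined at the slightly higher level $\lceil (n+1)(1-\alpha)\rceil/(n+1)$.
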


\begin{proof}
Let $S = \{i : p_i \le T\}$. By the definition of $T$ as the $\lceil n(1-\alpha+\delta) \rceil$-th smallest value in $\{p_i\}$, the size of set $S$ satisfies $|S| \ge n(1-\alpha+\delta)$.

Consider the value of the mixture CDF at $T$:
\[
F(T) = \frac{1}{n} \sum_{i=1}^n F_i(T) \ge \frac{1}{n} \sum_{i \in S} F_i(T).
\]
For any $i \in S$, we have $p_i \le T$. By the monotonicity of CDFs and the definition of the quantile $p_i$, $F_i(T) \ge F_i(p_i) \ge 1-\delta$. Substituting this into the sum:
\[
F(T) \ge \frac{|S|}{n}(1-\delta) \ge (1-\alpha+\delta)(1-\delta).
\]
Expanding the right-hand side yields:
\[
(1-\alpha+\delta)(1-\delta) = 1 - \delta - \alpha + \alpha\delta + \delta - \delta^2 = 1 - \alpha + \delta(\alpha-\delta).
\]
Since $0 < \delta < \alpha$, we have $\delta(\alpha-\delta) > 0$. Consequently:
\[
F(T) > 1-\alpha.
\]
By the definition of the mixture quantile $q = \inf \{x : F(x) \ge 1-\alpha\}$, the condition $F(T) > 1-\alpha$ implies $q \le T$.
\end{proof}

\begin{proof}[Proof of Theorem \ref{thm:efficiency}]
Let $F_i$ be the distribution of $S_i^{true}$ and $F_{mix}=\frac{1}{n}\sum_{i=1}^nF_i$. Define $p_i = \text{Quantile}(1-\delta, F_i)$.

By Lemma~\ref{lemma:A2}, we have $S_i^{TCPR} \ge p_i$ for all $i \in [n]$.
Since the empirical quantile function is monotonic with respect to the input values, we have:
\begin{equation}
\begin{aligned}
s^*_{TCPR} &= \text{Quantile}(1-\alpha + \delta, \{S_i^{TCPR}\}_{i=1}^n) \\
&\ge \text{Quantile}(1-\alpha + \delta, \{p_i\}_{i=1}^n) \quad\text{(By Lemma~\ref{lemma:A2})}\\
&\ge \text{Quantile}(1-\alpha, F_{mix}) \quad\text{(By Lemma~\ref{lemma:A1})}\\
&= s^*_{DCR}.
\end{aligned}
\end{equation}
Thus, $s^*_{DCR} \le s^*_{TCPR}$. Since the prediction set size is directly determined by the threshold, this implies $|\hat{\mathcal{C}}^{DCR}| \le |\hat{\mathcal{C}}^{TCPR}|$.
\end{proof}

\subsection{Proof of Theorem~\ref{thm:fcp_bound}}
\label{app:proof_fcp}

We model the calibration and test sets as a random partition of the exchangeable sequence $\mathbf{S}=(S_1,\dots,S_N)$. Conditioned on the multiset of values $\{S_1,\dots,S_N\}$, the calibration index set $\mathcal{I}_{\mathrm{cal}}$ is drawn uniformly at random without replacement from $\{1,\dots,N\}$ (and $\mathcal{I}_{\mathrm{test}}$ is its complement).

\begin{lemma}[DKW-type inequality for finite-population empirical CDFs under sampling without replacement]
\label{lemma:dkw_swr}
Let $\mathcal{X}_N=\{x_1,\dots,x_N\}$ be a fixed finite population and let $X_1,\dots,X_n$ be drawn uniformly without replacement from $\mathcal{X}_N$. Let $\hat{F}_n$ denote the empirical CDF of the sample and let $F_N$ denote the population CDF.
Then there exists a universal constant $C>0$ such that for any $\epsilon>0$,
\begin{equation}
\label{eq:dkw_swr}
\mathbb{P}\!\left(\sup_{t\in\mathbb{R}} \bigl|\hat{F}_n(t)-F_N(t)\bigr|>\epsilon\right)
\le 2\exp(-C\,n\,\epsilon^2).
\end{equation}
\end{lemma}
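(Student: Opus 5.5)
The plan is to reduce the sup-deviation to a one-dimensional count and then apply a finite-population Hoeffding bound for sampling without replacement, in the spirit of the classical DKW argument.

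\textbf{Fixed $t$.} For a fixed threshold $t$, the quantity $n\hat{F}_n(t)$ is the number of sampled units with $x\le t$. This count is hypergeometric with mean $nF_N(t)$. Hoeffding's theorem says sums under sampling without replacement are dominated in convex order by sums under sampling with replacement. Hence the Hoeffding bound gives
\[
\mathbb{P}\bigl(|\hat{F}_n(t)-F_N(t)|>\epsilon\bigr)\le 2\exp(-2n\epsilon^2).
\]

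\textbf{Discretization.} Since $F_N$ is a step function taking values in $\{0,1/N,\dots,1\}$, the supremum over $t\in\mathbb{R}$ is attained at the at most $N$ distinct population values. So it suffices to control finitely many events.

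A naive union bound over these $N$ points would introduce a factor $N$ that the statement forbids. To avoid it, I would use the standard DKW bracketing trick. Choose grid points $t_1<\dots<t_K$ with $K\approx 2/\epsilon$ such that $F_N$ increases by at most $\epsilon/2$ between consecutive grid points. Monotonicity of both $\hat{F}_n$ and $F_N$ then gives
\[
\sup_t|\hat{F}_n-F_N|\le \max_k|\hat{F}_n(t_k)-F_N(t_k)|+\epsilon/2 .
\]
A union bound over the grid yields a tail bound of $(4/\epsilon)\exp(-n\epsilon^2/2)$.

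\textbf{Absorbing the prefactor.} The $1/\epsilon$ prefactor is absorbed into the exponent by shrinking the constant. This works whenever $n\epsilon^2$ exceeds an absolute constant. In the complementary regime, $2\exp(-Cn\epsilon^2)\ge 1$ for small enough $C$, so the bound is trivial there.

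\textbf{Main obstacle and fallback.} The main obstacle is exactly this bracketing step: getting a bound uniform in $t$ with no dependence on $N$ and only a universal constant. If the bracketing bookkeeping gets messy, I would instead transfer Massart's tight DKW inequality for i.i.d.\ sampling. The sup-deviation is a convex functional of the sample sum of indicator vectors $(\mathbf{1}\{x\le t\})_t$. Hoeffding's convex-order comparison then gives
\[
\mathbb{E}\,\Psi(\text{w/o repl.})\le\mathbb{E}\,\Psi(\text{with repl.})
\]
for convex $\Psi$. Using exponential moments of the sup together with Massart's bound gives the claim with some universal $C$.
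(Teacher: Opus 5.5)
The gap is in your ``absorbing the prefactor'' step, and it is a real failure, not bookkeeping. Your fixed-$t$ Hoeffding bound and the reduction to finitely many $t$ are fine. The trouble is that $4/\epsilon$ is not a function of $n\epsilon^2$. With $u=n\epsilon^2$, your bracketing bound reads $4\sqrt{n/u}\,e^{-u/2}$.

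Fix any $u>(\ln 2)/C$, so that the target $2e^{-Cu}$ is strictly below $1$, and let $n\to\infty$ with $\epsilon=\sqrt{u/n}$. Your bound then tends to $+\infty$, so no universal $C$ can come out of it. Absorbing $1/\epsilon$ needs $n\epsilon^2\gtrsim\log(1/\epsilon)$, i.e.\ $n\epsilon^2$ of order $\log n$; below that scale your estimate exceeds $1$. This is exactly the difference between the elementary bracketing (Glivenko--Cantelli) bound and a true DKW inequality.

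It is also the regime the paper relies on. In the proof of Theorem~\ref{thm:fcp_bound}, $n\epsilon_n^2=\ln(4/\beta)/C$ stays bounded as $n$ grows, so your primary route would only give that theorem with an extra $\log n$ inside the square roots. Removing the $1/\epsilon$ within a bracketing framework would need chaining with variance-sensitive (Bernstein-type) increments, not a single union bound. A smaller, cosmetic point: at atoms of $F_N$ with mass above $\epsilon/2$, your grid also needs the left limits $\hat F_n(t_k^-)$.

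Your fallback, by contrast, is correct and should be the main argument.
\begin{itemize}
\item Both CDFs jump only at the distinct population values $a_1<\dots<a_M$. So the deviation $D$ equals $\max_j|v_j/n-F_N(a_j)|$, where $v$ is the sum over the sample of the vectors $(\mathbf{1}\{x\le a_j\})_{j\le M}$. Hence $e^{\lambda D}$ is a convex function of $v$ for $\lambda\ge 0$.
\item Hoeffding's convex-order comparison holds verbatim for $\mathbb{R}^M$-valued populations; the proof is Jensen over the multiplicity pattern of the with-replacement draw.
\item The with-replacement counterpart $D'$ is the Kolmogorov--Smirnov statistic of an i.i.d.\ sample from $F_N$, to which Massart's bound applies for arbitrary, including discrete, $F$.
\item Then $\mathbb{E}e^{\lambda D}\le\mathbb{E}e^{\lambda D'}\le 1+\int_0^\infty 2\lambda e^{\lambda s-2ns^2}\,ds\le 1+2\lambda\sqrt{\pi/(2n)}\,e^{\lambda^2/(8n)}$.
\item Chernoff with $\lambda=4n\epsilon$ gives $\mathbb{P}(D>\epsilon)\le e^{-4n\epsilon^2}+8\sqrt{\pi/2}\,\sqrt{n\epsilon^2}\,e^{-2n\epsilon^2}$. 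This depends on $n\epsilon^2$ alone, which is precisely what the bracketing route lacked.
\item It is at most $C'e^{-n\epsilon^2}$, and your trivial-regime observation converts this to $2e^{-Cn\epsilon^2}$ for a smaller universal $C$.
\end{itemize}
The paper itself gives no argument for this lemma, only a pointer to Ros\'en and Serfling. Your fallback route gives a self-contained reduction to the i.i.d.\ DKW inequality and makes the universality of $C$ explicit.
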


\begin{proof}
The statement is a standard consequence of classical finite-population empirical process theory.
In particular, results on uniform deviations of the empirical distribution function under sampling without replacement
(e.g., \citet{rosen1965} and \citet{serfling1974}) imply that the Kolmogorov--Smirnov statistic
$\sup_t |\hat{F}_n(t)-F_N(t)|$ admits a sub-Gaussian tail bound of the form~\eqref{eq:dkw_swr}
with a universal constant $C>0$ (independent of $N,n$ and the population values).
For completeness, we use this established DKW-type inequality for finite populations as stated.
\end{proof}

\begin{proof}[Proof of Theorem~\ref{thm:fcp_bound}]
Recall that the conformal threshold $s^*$ is defined as the empirical quantile satisfying
\begin{equation}
\hat{F}_{\mathrm{cal}}(s^*) \ge \frac{\lceil(n+1)(1-\alpha)\rceil}{n+1} \ge 1-\alpha,
\end{equation}
and the realized False Coverage Proportion (FCP) on the test set is
$\mathrm{FCP}=1-\hat{F}_{\mathrm{test}}(s^*)$.

We aim to bound $\mathrm{FCP}-\alpha$. Since $1-\alpha \le \hat{F}_{\mathrm{cal}}(s^*)$, we have
\begin{align}
\mathrm{FCP}-\alpha
&= 1-\hat{F}_{\mathrm{test}}(s^*)-\alpha \\
&\le \hat{F}_{\mathrm{cal}}(s^*)-\hat{F}_{\mathrm{test}}(s^*).
\end{align}
(The inequality is exact; any discretization effect from the $(n\!+\!1)^{-1}$ grid can be tracked separately and is dominated by the concentration terms for large $n$.)

Let $F_N$ be the population CDF of the full multiset $\{S_1,\dots,S_N\}$. Using $F_N$ as a pivot,
\begin{align}
\hat{F}_{\mathrm{cal}}(s^*)-\hat{F}_{\mathrm{test}}(s^*)
&= \bigl(\hat{F}_{\mathrm{cal}}(s^*)-F_N(s^*)\bigr) + \bigl(F_N(s^*)-\hat{F}_{\mathrm{test}}(s^*)\bigr) \\
&\le \sup_t \bigl|\hat{F}_{\mathrm{cal}}(t)-F_N(t)\bigr| \;+\; \sup_t \bigl|\hat{F}_{\mathrm{test}}(t)-F_N(t)\bigr|.
\end{align}

We now apply Lemma~\ref{lemma:dkw_swr} to each supremum term.
Allocate failure probability $\beta/2$ to each deviation and define
\begin{equation}
\epsilon_n := \sqrt{\frac{\ln(4/\beta)}{C\,n}}
\qquad\text{and}\qquad
\epsilon_m := \sqrt{\frac{\ln(4/\beta)}{C\,m}}.
\end{equation}
By Lemma~\ref{lemma:dkw_swr},
\begin{align}
\mathbb{P}\!\left(\sup_t \bigl|\hat{F}_{\mathrm{cal}}(t)-F_N(t)\bigr|>\epsilon_n\right)
&\le 2\exp(-C n \epsilon_n^2)=\frac{\beta}{2}, \\
\mathbb{P}\!\left(\sup_t \bigl|\hat{F}_{\mathrm{test}}(t)-F_N(t)\bigr|>\epsilon_m\right)
&\le 2\exp(-C m \epsilon_m^2)=\frac{\beta}{2}.
\end{align}
By the union bound, with probability at least $1-\beta$, both events fail to occur simultaneously, hence
\begin{equation}
\mathrm{FCP}-\alpha \le \epsilon_n+\epsilon_m.
\end{equation}
This rearranges to the stated bound in Theorem~\ref{thm:fcp_bound}.
\end{proof}

\subsection{Proof of Theorem~\ref{thm:variance} (Asymptotic Variance Reduction)}
\label{app:proof_variance}
\begin{proof}
Let $\mathcal{D}_{\text{cal}} = \{(X_i, R_i^c)\}_{i=1}^n$ denote the calibration data.
DCR computes the threshold $s^*_{\text{DCR}}$ deterministically based on the mixture distribution $F_{\text{mix}}(t)$. Thus, conditioned on $\mathcal{D}_{\text{cal}}$, $s^*_{\text{DCR}}$ is a constant, and its conditional variance is zero:
\[
\Var(s^*_{\text{DCR}} \mid \mathcal{D}_{\text{cal}}) = 0.
\]

MDCR computes the threshold $s^*_{\text{MDCR}}$ as the empirical quantile of proxy scores $\{\tilde{s}_i\}_{i=1}^n$, where $\tilde{s}_i$ are independently drawn from $F_i(\cdot)$.
Let $\sigma^2_{\text{MC}}(\mathcal{D}_{\text{cal}}) = \Var(s^*_{\text{MDCR}} \mid \mathcal{D}_{\text{cal}})$ denote the variance induced by the Monte Carlo sampling step conditioned on the data. Since the distributions $F_i$ are non-degenerate, we have $\sigma^2_{\text{MC}}(\mathcal{D}_{\text{cal}}) > 0$.

By the Law of Total Variance, the marginal variance of the MDCR estimator is:
\begin{equation}
\label{eq:total_var}
\Var(s^*_{\text{MDCR}}) = \Var(\E[s^*_{\text{MDCR}} \mid \mathcal{D}_{\text{cal}}]) + \E[\Var(s^*_{\text{MDCR}} \mid \mathcal{D}_{\text{cal}})].
\end{equation}

To compare this with $\Var(s^*_{\text{DCR}})$, we consider the asymptotic regime ($n \to \infty$).
By the consistency of sample quantiles (given the Glivenko--Cantelli convergence of the stratified empirical CDF), $s^*_{\text{MDCR}}$ is an asymptotically unbiased estimator of $s^*_{\text{DCR}}$ (the true quantile of the mixture).
Therefore, the first term in Eq.~\eqref{eq:total_var} converges to the variance of the target:
\[
\Var(\E[s^*_{\text{MDCR}} \mid \mathcal{D}_{\text{cal}}]) \to \Var(s^*_{\text{DCR}}) \quad \text{as } n \to \infty.
\]
The second term in Eq.~\eqref{eq:total_var} remains strictly positive due to the irreducible Monte Carlo noise.
Consequently, for the asymptotic variance (AsyVar), we strictly have:
\[
\text{AsyVar}(s^*_{\text{MDCR}}) = \text{AsyVar}(s^*_{\text{DCR}}) + \E[\sigma^2_{\text{MC}}] > \text{AsyVar}(s^*_{\text{DCR}}).
\]
This proves that DCR achieves strictly lower variance than MDCR asymptotically.
\end{proof}

\section{Computational Complexity Analysis}
\label{app:complexity}

In this section, we analyze the computational complexity of our proposed method DCR, its stochastic variant MDCR, and the baseline TCPR \citep{fermanian2025transductive}. Let $n$ be the size of the calibration set, $m$ be the size of the test set, and $N = n + m$ be the total number of items.

\subsection{TCPR (Baseline)}
\label{app:complexity_tcpr}

The computational bottleneck of TCPR lies in the construction of the rank envelopes (linear or quantile). To ensure the validity of these envelopes with high probability $1-\delta$, TCPR relies on Monte-Carlo simulations of the rank vector.

\begin{itemize}
    \item \textbf{Simulation Step:} The algorithm generates $K$ independent realizations of the rank vector. 
    In a naive implementation, obtaining the ranks of $n$ calibration items among $N=n+m$ total items would require sorting all $N$ random variables, leading to a complexity of $\mathcal{O}(K \cdot N \log N)$. 
    However, in our efficient implementation, we directly sample $n$ sorted integers from the range $[1, N]$ without explicitly sorting the unobserved $m$ test items. This optimization reduces the complexity of the simulation step to $\mathcal{O}(K \cdot n \log n)$. 
    This explains the empirical behavior observed in Figure \ref{fig:runtime_analysis} (Left), where the runtime of TCPR is largely independent of the test set size $m$.

    \item \textbf{Optimization Step:} To find the optimal envelope parameters (e.g., width $c$ or quantile $\gamma$), TCPR iterates through the $K$ simulated samples. This typically involves a search procedure over the parameter space, checking the coverage condition against $K$ samples. 
\end{itemize}

In practice, $K$ must be large (e.g., $K=10^5$ as suggested in \citet{fermanian2025transductive}) to control the Monte-Carlo error. Consequently, the total complexity is dominated by $\mathcal{O}(K \cdot n \log n)$. While this optimized implementation decouples the runtime from $m$, the large constant factor $K$ results in significant computational overhead compared to DCR, particularly when $n$ is large, as shown in Figure \ref{fig:runtime_analysis} (Right).

\subsection{DCR (Proposed)}
DCR is a deterministic algorithm that computes the exact mixture distribution of the non-conformity scores.
\begin{itemize}
    \item \textbf{Distribution Construction:} For each of the $n$ calibration items, we construct the PMF of the Negative Hypergeometric distribution. The support of this distribution has size $m+1$. Calculating the PMF/CDF values takes $O(m)$ operations per item. Total cost: $O(nm)$.
    \item \textbf{Mixture Aggregation:} We aggregate $n$ individual CDFs to form the mixture CDF $\hat{F}_{mix}$. This summation takes $O(nm)$.
    \item \textbf{Threshold Computation:} Finding the $(1-\alpha)$-quantile from the mixture CDF (which is a step function supported on the discrete score grid) requires scanning the sorted unique score values. In the worst case, this is bounded by $O(nm)$.
\end{itemize}
The total complexity of DCR is $O(nm)$. Notably, it does not depend on a large simulation constant $K$, making it significantly faster than TCPR for typical dataset sizes.

\subsection{MDCR (Stochastic Approximation)}
MDCR approximates DCR by sampling a single rank realization for each calibration item.
\begin{itemize}
    \item \textbf{Sampling:} We draw $n$ independent samples from the Negative Hypergeometric distribution. With efficient sampling algorithms, this takes $O(n)$.
    \item \textbf{Threshold Estimation:} We compute $n$ scores and sort them to find the empirical quantile. This takes $O(n \log n)$.
\end{itemize}
The total complexity is $O(n \log n)$ (excluding the $O(m)$ cost to predict for test items, which is shared by all methods). MDCR is the most computationally efficient method, suitable for massive datasets, though it incurs greater variance than DCR as discussed in Theorem~\ref{thm:variance}.

\section{Additional Experiments}
\label{app:additionalexp}

\begin{figure}[h]
    \centering
    \includegraphics[width=0.6\linewidth]{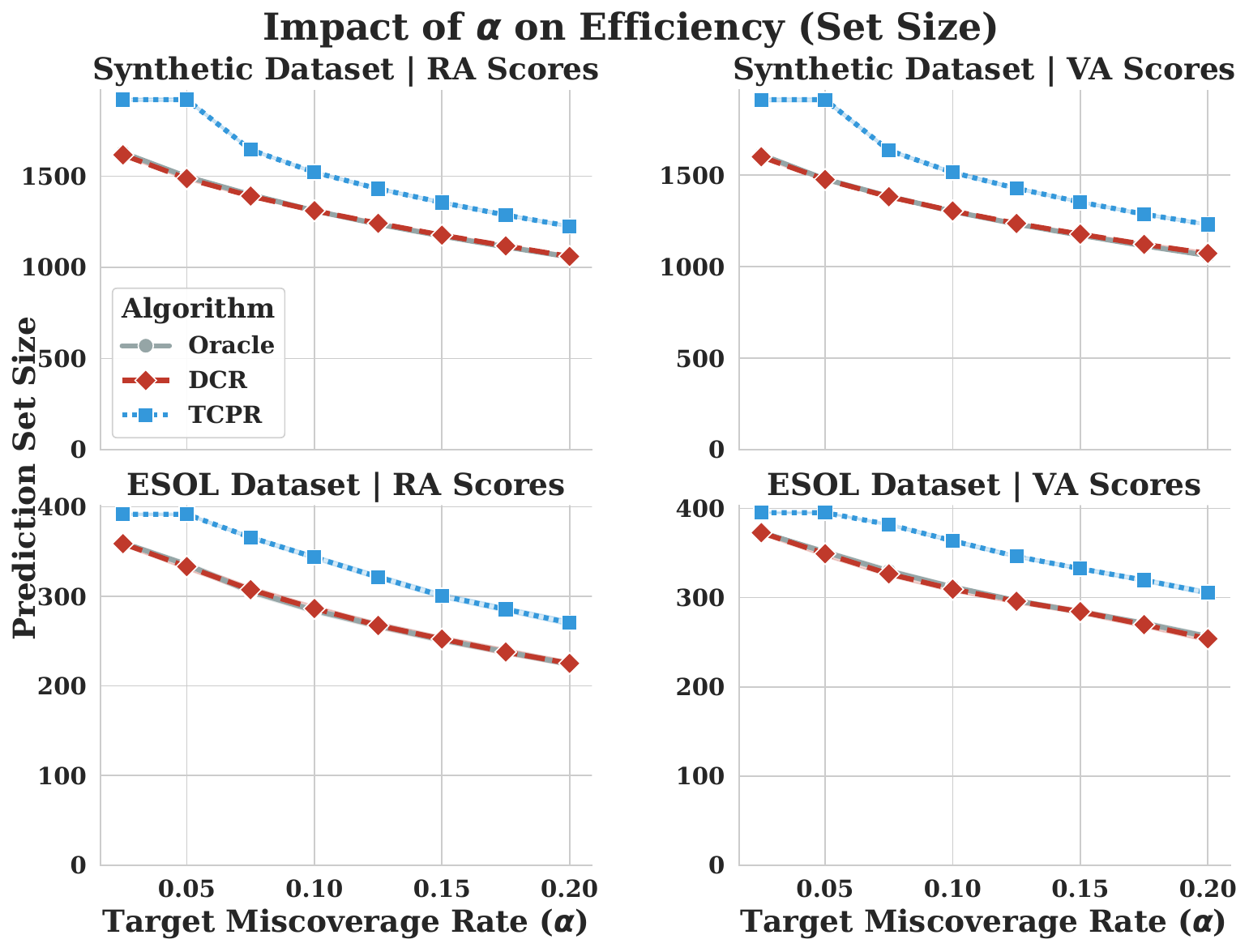}
    \caption{Impact of miscoverage rate $\alpha$ on prediction set size. The experiments are conducted on Synthetic (top) and ESOL (bottom) datasets using RA (left) and VA (right) scores.}
    \label{fig:ablation_alpha}
\end{figure}

\subsection{DCR achieves higher scalability across diverse reliability requirements than the TCPR baseline.}\label{add:alpha}
In Figure \ref{fig:ablation_alpha}, we analyze the prediction set size as a function of the target miscoverage rate $\alpha \in [0.025, 0.20]$ on the Synthetic ($n=500, m=1000$) and ESOL datasets. The results demonstrate that DCR maintains a near-linear relationship between the absolute prediction set size and $\alpha$, consistently delivering tighter and actionable sets. In contrast, TCPR tends to saturate at low $\alpha$ levels, where its set size effectively reaches a plateau, producing disproportionately large intervals that offer diminished utility. This contrast highlights that DCR ensures high utility even under strict reliability constraints, whereas TCPR struggles to maintain efficiency.

\begin{figure}[h]
    \centering
    \includegraphics[width=(\linewidth/3)*2]{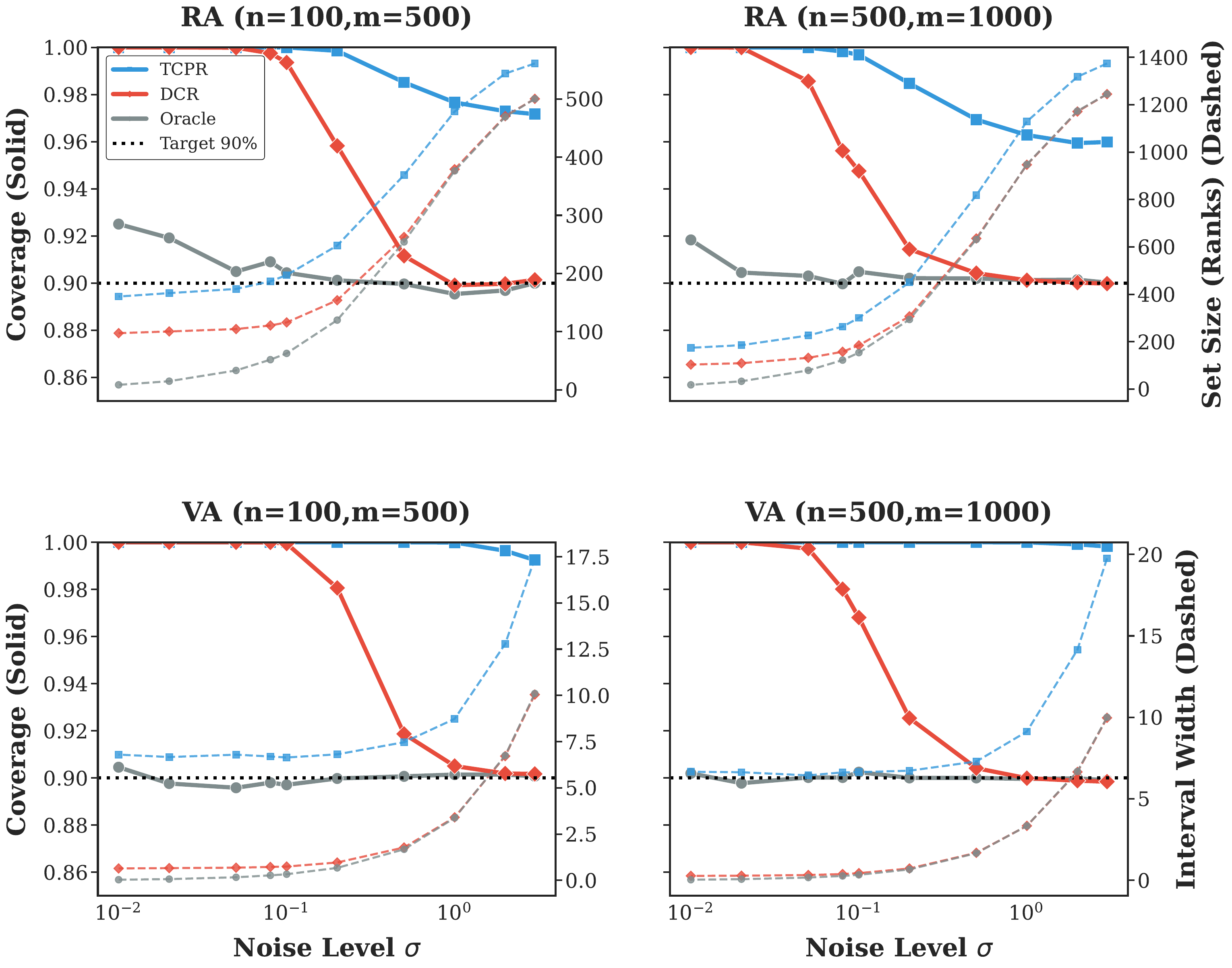}
    \caption{Impact of base model precision ($\sigma$) on uncertainty quantification.}
    \label{fig:model_quality}
\end{figure}
\subsection{DCR adapts to model precision, rapidly shedding conservatism as noise increases.}\label{add:presion}
Figure \ref{fig:model_quality} examines the impact of base model precision on uncertainty quantification, where a lower noise level $\sigma$ indicates higher precision. While TCPR exhibits persistent conservatism regardless of model quality, DCR demonstrates superior adaptivity. Consistent with Remark \ref{rem:conservatism}, DCR exhibits increased over-coverage in the high-precision regime ($\sigma < 0.1$) due to the dispersion mismatch between realized scores and the mixture distribution. However, this conservatism remains strictly lower than that of TCPR. As model precision decreases ($\sigma$ increases), this mismatch resolves, and DCR's coverage converges rapidly to the nominal target, whereas TCPR remains unnecessarily conservative across all regimes.

\begin{figure}[h]
    \centering
    \includegraphics[width=0.6\linewidth]{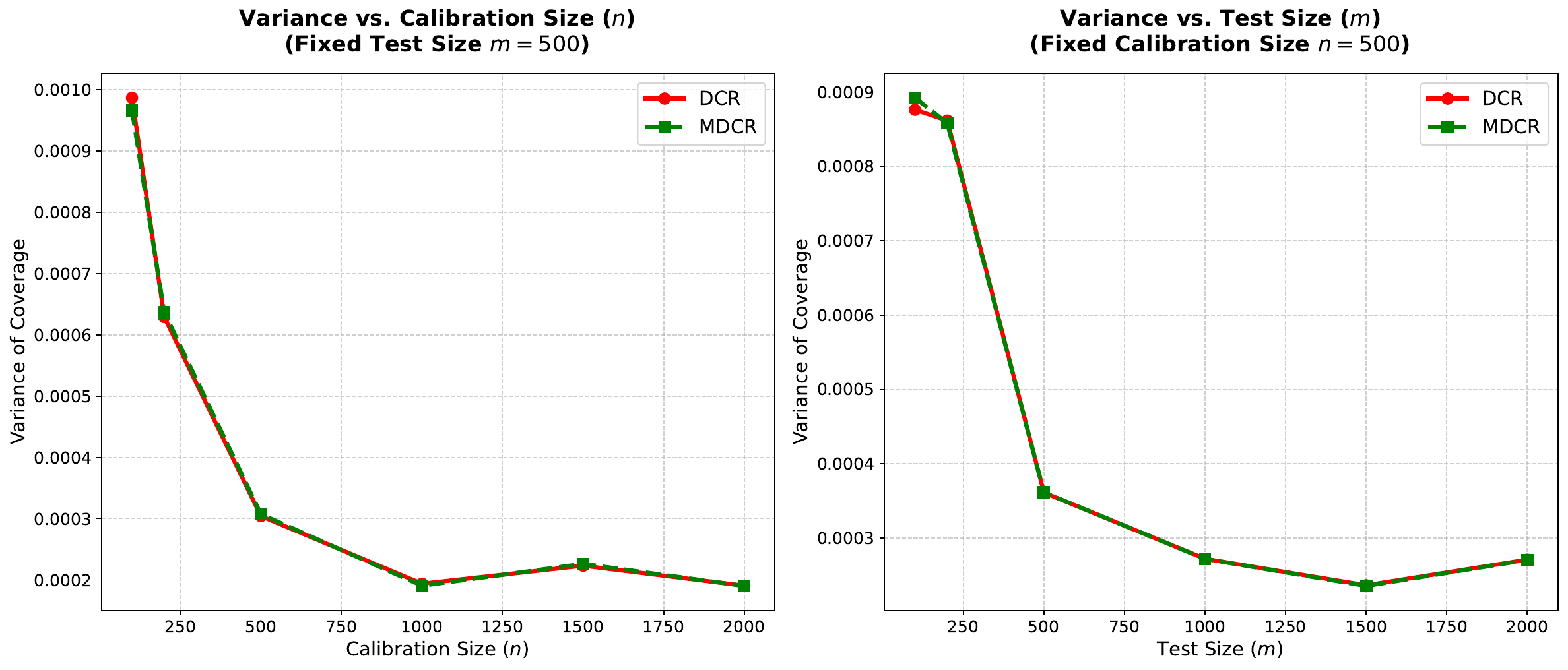}
    \caption{Comparison of the variance of realized coverage between DCR and MDCR on synthetic data. Left: Varying calibration size $n$ with fixed test size $m=500$. Right: Varying test size $m$ with fixed calibration size $n=500$. Both methods exhibit diminishing variance as sample sizes increase, with the theoretical gap between them being empirically negligible.}
    \label{fig:variance_comparison}
\end{figure}

\subsection{Empirical comparison of variance between DCR and MDCR.}\label{add:variance}
While Theorem \ref{thm:variance} establishes that DCR achieves strictly lower asymptotic variance than MDCR, we investigate the practical magnitude of this difference on synthetic data. We conducted experiments varying the calibration size $n \in [100, 2000]$ (with fixed $m=500$) and test size $m \in [100, 2000]$ (with fixed $n=500$). For each setting, we repeated the trial 1,000 times and calculated the variance of the realized coverage.

The results, shown in Figure \ref{fig:variance_comparison}, reveal two key insights. First, the variance curves of DCR and MDCR are nearly indistinguishable across all settings, suggesting that the additional stochasticity from MDCR's sampling step is negligible compared to the intrinsic data sampling variability. Second, we observe a consistent downward trend in variance for both methods as either $n$ or $m$ increases. This behavior is consistent with the law of large numbers and our finite-sample concentration analysis (Theorem \ref{thm:fcp_bound}), confirming that larger calibration and test sets lead to progressively more stable uncertainty quantification. Consequently, MDCR offers a favorable trade-off, providing substantial computational speedups with virtually no practical loss in stability compared to the exact DCR solution.

\end{document}